\documentclass[11pt]{article}
\usepackage{hyphenat}
\usepackage{sibarticle}
\usepackage{lmodern}
\usepackage{url}
\usepackage{nicefrac}
\usepackage{microtype}
\usepackage{amssymb,graphicx}
\usepackage{float,amsmath}
\usepackage{amsfonts,epsfig}
\usepackage{subcaption}
\usepackage{color}
\usepackage{hyperref}
\usepackage[dvipsnames]{xcolor}
\usepackage{xargs}
\usepackage{verbatim}

\usepackage{tikz}
\usetikzlibrary{calc,arrows.meta,backgrounds,patterns,decorations.pathreplacing}
\usepackage{pgfplots}
\pgfplotsset{compat=1.18}

\usepackage{algorithm}
\usepackage{algpseudocode}

\newcommandx{\ilker}[2][1=]{\todo[size=tiny, linecolor=Green,backgroundcolor=Green!25,bordercolor=Green,#1]{
\begin{spacing}{0.75}
\tiny{\textcolor{blue}{#2}}
\end{spacing}}}
\newcommandx{\ilkercaption}[2][1=]{\todo[inline,linecolor=Green,backgroundcolor=Green!25,bordercolor=Green,#1]{\footnotesize{\textcolor{blue}{#2}}}}
\newcommandx{\wenhao}[2][1=]{\todo[size=tiny, linecolor=Orange,backgroundcolor=Orange!25,bordercolor=Orange,#1]{
\begin{spacing}{0.5}
\tiny{\textcolor{blue}{#2}}
\end{spacing}}}
\newcommandx{\wenhaocaption}[2][1=]{\todo[inline,linecolor=Orange,backgroundcolor=Orange!25,bordercolor=Orange,#1]{\footnotesize{\textcolor{blue}{#2}}}}

\newcommand{\addcite}[1]{\textcolor{blue}{[Ref.]}}

\title{Local Violation Certification for Linear \\ Predict-Then-Optimize Pipelines} 
\ShortTitle{Local Violation Certification}
\ShortAuthors{Birbil \& Chi}

\NumberOfAuthors{2}

\FirstAuthor{Ş. İlker Birbil}
\FirstAuthorAddress{Faculty of Economics and Business,
University of Amsterdam, The Netherlands}

\SecondAuthor{Wenhao Chi}
\SecondAuthorAddress{Yau Mathematical Sciences Center, Tsinghua University \\
Faculty of Economics and Business, University of Amsterdam, The Netherlands}

\keywords{violation certification, predict-then-optimize, scenario generation, analytical solution}

\allowdisplaybreaks

\begin{document}


\newcommand\hlb[3][]{ \todo[inline,caption={emptytext},
  size=\normalsize, backgroundcolor=yellow!70, bordercolor=yellow!70, noshadow, #1]{
    \begin{minipage}{
        \textwidth-4pt}#2
    \end{minipage}}
  \todo{\begin{spacing}{0.5}#3\end{spacing}}}

\newcommand{\hlc}[2]{\hl{#1}\todo{\begin{spacing}{0.5}#2\end{spacing}}}

\newcommand{\sitodo}[2][]{\todo[caption={#2}, #1]{
    \begin{spacing}{0.5}#2\end{spacing}}}

\newcommand{\intodo}[2][]{\todo[inline, noshadow, caption={emptytext}, #1]{{\bf \textcolor{blue}{TO-DO:}} \newline
    \begin{spacing}{1.0}\normalsize{#2}\end{spacing}}}

\newcommand{\sib}[1]{\textcolor{violet}{#1}}
\newcommand{\tsib}[1]{\begin{tcolorbox}\textcolor{violet}{#1}\end{tcolorbox}}

\newcounter{sibcmntcounter}
\setcounter{sibcmntcounter}{1}
\long\def\symbolfootnote[#1]#2{\begingroup
  \def\thefootnote{\fnsymbol{footnote}}\footnote[#1]{#2}\endgroup}
\newcommand{\sibcmnt}[1]{{\small\textbf{
      \textcolor{violet}{(C.\arabic{sibcmntcounter})}}
    \let\thefootnote\relax\footnotetext{\textcolor{violet}
        {\scriptsize(C.\arabic{sibcmntcounter})~#1}}}
  \addtocounter{sibcmntcounter}{1}}

\newcommand{\red}[1]{\textcolor{red}{#1}}
\newcommand{\blue}[1]{\textcolor{blue}{#1}}
\newcommand{\magenta}[1]{\textcolor{magenta}{#1}}
\newcommand{\rb}[1]{\raisebox{-1.5ex}[0cm][0cm]{#1}}
\newcommand{\HRule}{\noindent\rule{\linewidth}{0.5mm}}
\newcommand{\dsum}{\displaystyle\sum}
\newcommand{\veps}{\varepsilon}

\newcommand{\CA}{\mathcal{A}}
\newcommand{\CB}{\mathcal{B}}
\newcommand{\CC}{\mathcal{C}}
\newcommand{\CD}{\mathcal{D}}
\newcommand{\CG}{\mathcal{G}}
\newcommand{\CI}{\mathcal{I}}
\newcommand{\CJ}{\mathcal{J}}
\newcommand{\CK}{\mathcal{K}}
\newcommand{\CL}{\mathcal{L}}
\newcommand{\CM}{\mathcal{M}}
\newcommand{\CN}{\mathcal{N}}
\newcommand{\CP}{\mathcal{P}}
\newcommand{\CS}{\mathcal{S}}
\newcommand{\CT}{\mathcal{T}}
\newcommand{\CU}{\mathcal{U}}
\newcommand{\CX}{\mathcal{X}}
\newcommand{\YY}{\mathbb{Y}}
\newcommand{\ZZ}{\mathbb{Z}}
\newcommand{\RR}{\mathbb{R}}
\newcommand{\NN}{\mathbb{N}}
\newcommand{\II}{\mathbb{1}}

\renewcommand{\vec}[1]{{\boldsymbol{\mathbf{#1}}}}

\newcommand{\va}{\vec{a}}
\newcommand{\vb}{\vec{b}}
\newcommand{\vc}{\vec{c}}
\newcommand{\vd}{\vec{d}}
\newcommand{\ve}{\vec{e}}
\newcommand{\vf}{\vec{f}}
\newcommand{\vg}{\vec{g}}
\newcommand{\vh}{\vec{h}}
\newcommand{\vp}{\vec{p}}
\newcommand{\vr}{\vec{r}}
\newcommand{\vt}{\vec{t}}
\newcommand{\vu}{\vec{u}}
\newcommand{\vw}{\vec{w}}
\newcommand{\vx}{\vec{x}}
\newcommand{\vy}{\vec{y}}
\newcommand{\vz}{\vec{z}}
\newcommand{\zv}{\vec{0}}
\newcommand{\ov}{\vec{1}}

\newcommand{\vveps}{\vec{\veps}}
\newcommand{\veta}{\vec{\eta}}
\newcommand{\vxi}{\vec{\xi}}
\newcommand{\valpha}{\vec{\alpha}}
\newcommand{\vbeta}{\vec{\beta}}
\newcommand{\vgamma}{\vec{\gamma}}
\newcommand{\vtheta}{\vec{\theta}}
\newcommand{\vlambda}{\vec{\lambda}}
\newcommand{\vnu}{\vec{\nu}}
\newcommand{\vpi}{\vec{\pi}}
\newcommand{\vtau}{\vec{\tau}}
\newcommand{\vSigma}{\vec{\Sigma}}
\newcommand{\vOmega}{\vec{\Omega}}
\newcommand{\vTheta}{\vec{\Theta}}
\newcommand{\vmu}{\vec{\mu}}

\newcommand{\mA}{\vec{A}}
\newcommand{\mB}{\vec{B}}
\newcommand{\mC}{\vec{C}}
\newcommand{\mD}{\vec{D}}
\newcommand{\mE}{\vec{E}}
\newcommand{\mF}{\vec{F}}
\newcommand{\mG}{\vec{G}}
\newcommand{\mH}{\vec{H}}
\newcommand{\mI}{\vec{I}}
\newcommand{\mL}{\vec{L}}
\newcommand{\mM}{\vec{M}}
\newcommand{\mP}{\vec{P}}
\newcommand{\mQ}{\vec{Q}}
\newcommand{\mR}{\vec{R}}
\newcommand{\mS}{\vec{S}}
\newcommand{\mU}{\vec{U}}
\newcommand{\mV}{\vec{V}}
\newcommand{\mX}{\vec{X}}

\newcommand{\tr}{^{\top}}
\newcommand{\ntr}{^{-\top}}
\newcommand{\inv}{^{-1}}

\newcommand{\gr}{\mbox{graph}}
\newcommand{\ra}{\rightarrow}
\newcommand{\la}{\leftarrow}
\newcommand{\Ra}{\Rightarrow}
\newcommand{\rra}{\rightrightarrows}
\newcommand{\ptr}{\marginpar{$\Leftarrow$}}

\newcommand{\pfxi}{\frac{\partial f(\vx)}{\partial x_i}}
\newcommand{\pfx}{\partial f(\vx)}
\newcommand{\pf}{\partial f}
\newcommand{\pxi}{\partial x_i}
\newcommand{\px}{\partial x}

\newcommand{\nfx}{\nabla f(\vx)}
\newcommand{\eps}{\epsilon}
\newcommand{\eg}{\textit{e.g.}}
\newcommand{\ie}{\textit{i.e.}}

\newcommand{\vsp}{\vspace{4mm}}
\newcommand{\vspp}{\vspace{8mm}}
\newcommand{\vsppp}{\vspace{12mm}}

\newcommand{\hsp}{\hspace{4mm}}
\newcommand{\hspp}{\hspace{8mm}}
\newcommand{\hsppp}{\hspace{12mm}}

\newcommand{\pr}[1]{P\left(#1\right)}
\newcommand{\ex}[1]{\mathbb{E}\left[#1\right]}
\newcommand{\variance}[1]{\mbox{Var}\left(#1\right)}
\newcommand{\covar}[1]{\mbox{Cov}\left(#1\right)}
\newcommand{\C}[2]{\left(\begin{array}{c} #1 \\ #2 \end{array}\right)}

\newcommand{\maximize}{\mbox{maximize\hspace{4mm} }}
\newcommand{\minimize}{\mbox{minimize\hspace{4mm} }}
\newcommand{\subto}{\mbox{subject to\hspace{4mm}}}

\newenvironment{sibitemize}{
  \renewcommand{\labelitemi}{$\diamond$}
  \begin{itemize}
    \setlength{\parskip}{0mm}}
  {\end{itemize}}

\newcommand{\propnum}[2]{\vspace{3mm}
  \noindent {\sc Proposition #1}{\it #2} \vspace{3mm}}
\newcommand{\lemnum}[2]{\vspace{3mm}
  \noindent {\sc Lemma #1}{\it #2} \vspace{3mm}}
\newcommand{\thmnum}[2]{\vspace{3mm}
  \noindent {\sc Theorem #1}{\it #2} \vspace{3mm}}

\maketitle

\begin{abstract}
Data-driven decision pipelines combining predictive machine learning models with downstream optimization software are increasingly used to make high-stakes operational decisions. Certifying the safety, fairness, and reliability of these decisions is essential, yet traditional scenario generation methods rely on repeated random testing, which becomes computationally prohibitive when failure events are rare and offers little insight into why failures occur. We present a framework for local violation certification designed specifically for linear decision pipelines under input uncertainty. We mathematically demonstrate that standard sampling methods fail efficiently for rare violations, motivating a direct structural approach. By analyzing the fixed decision boundary of a deployed pipeline, we show that the local risk of failure can be calculated directly in closed form using a single optimization solve. Furthermore, we introduce an exact sampling procedure and closed-form risk statistics that provide feature-level attributions (identifying which input characteristics contribute most to potential non-compliance) without requiring repetitive random trials or complex sampling algorithms. We demonstrate our approach on an economic power dispatch system subject to emissions regulations, delivering precise, auditable risk assessments at a fraction of the traditional computational cost.
\end{abstract}

\section{Introduction.}
\label{sec:introduction}

Modern decision systems increasingly couple a machine learning predictive model with a large-scale constrained optimization program.  A trained regression model forecasts demand at each customer location; a linear program then allocates limited service capacity across locations while meeting regulatory service-level commitments.  A linear predictor estimates task-completion times for individual workers; an assignment program allocates tasks to groups while satisfying fairness constraints.  These \emph{Predict-Then-Optimize (PTO) pipelines} \citep{bertsimas2020predictive,ElmachtoubGrigas2022,DontiAmosKolter2017} are responsible for consequential decisions affecting large populations.

The deployment of such pipelines at scale raises an immediate question: do they respect fairness, robustness, or safety requirements?  Can we certify that a task-assignment pipeline does not systematically allocate the most burdensome work to a protected group -- not through any explicit rule, but because the predictor has absorbed historical patterns of inequity?  Can we certify that a resource-allocation pipeline will meet its service-level commitments for a minority customer segment under rare but plausible demand scenarios?  The EU AI Act \citep{EUAIA2024} now mandates conformity assessment for ML systems used in workforce management, critical infrastructure, and financial services (Annexes II--III, Articles 9, 10, 17). 

A natural approach to assessing pipeline reliability is \emph{scenario generation} \citep{shapiro2021lectures}: draw a large number of input samples independently from the input distribution, run each through the pipeline, and record the fraction that produce a violation. This approach is simple, model-agnostic, and widely used in practice. However, it suffers from two fundamental limitations that become severe precisely when reliability matters most--that is, when violations are rare. First limitation is \emph{prohibitive sample complexity}. When violations are rare events, the fraction of independent identically distributed (i.i.d.) samples that land in the violation region is small. To estimate the violation rate to a fixed absolute accuracy with high probability, one needs a number of samples that is inversely proportional to the violation rate, which we show this formally in our work. This is prohibitive when the violation rate is small. For instance, certifying that a pipeline disproportionately affects a minority group comprising a small fraction of the population requires on the order of thousands of pipeline evaluations, each solving an optimization problem just to observe a handful of violations. The second limitation is \emph{no characterization of the violation distribution}. Even when enough samples are collected to estimate the violation rate, the scenario generation approach provides no information about the \emph{structure} of violations: which inputs are most likely to cause a violation, how concentrated violations are in the input space, or what distinguishes marginal violators from severe ones. This information is essential for two downstream tasks that practitioners pay attention to:
\begin{enumerate}
    \item \emph{Explanation.} Given that a violation occurs, which input characteristics are responsible? Answering this requires the conditional distribution of the input given a violation, which scenario generation cannot provide efficiently when violations are rare.
    
    \item \emph{Uncertainty quantification.} How robust is the decision to input uncertainty? Is the violation region a thin sliver near a decision boundary, or a broad set that captures a substantial portion of the plausible inputs? The geometry of the violation distribution determines the answer, and cannot be inferred from a violation rate alone.
\end{enumerate}

In this paper, we study the local version of the reliability question. The practical setting we have in mind is a single committed decision. A \emph{deployed pipeline} observes an input and implements the decision computed from it. The input is, however, subject to uncertainty (measurement noise, estimation error, or a prescribed stress model) so the realized input may be perturbed. We model the deployed system as re-solving the mathematical model at the realized input, so the implemented decision is the perturbed optimal solution. Then, we ask: how robust is the decision to perturbations of the given input? What fraction of perturbed inputs trigger a violation? What typical violating perturbations look like? We show for a linear predict-then-optimize pipeline (affine predictor, linear programming model, linear violation condition) that under Gaussian perturbations, these questions have a complete and computationally very efficient answer: the local violation rate enjoys a closed-form evaluation and the local violation distribution admits an exact direct sampler. This local characterization is not merely a computational convenience. It directly addresses what reliability assessment of a deployed system requires: a certificate for the specific decision as deployed and the most-sensitive perturbation direction providing feature-level attributions.

\paragraph{Related Work.} Our setting is the \emph{predict-then-optimize} pipeline, in which a learned model supplies the parameters of a downstream optimization problem. A large literature studies how to \emph{train} such pipelines so that predictions serve the eventual decision: the smart ``predict, then optimize'' loss of \citep{ElmachtoubGrigas2022}, task-based end-to-end learning through the optimization layer \citep{DontiAmosKolter2017}, and the prescriptive-analytics framework of \citep{bertsimas2020predictive}. That work asks how to \emph{build} a better pipeline; we take a trained pipeline as given and ask
a different question (how reliable is the decision it has already committed to) which is an auditing rather than a training problem.

A second body of work confronts uncertainty at optimization time. Stochastic programming optimizes an expected objective over a distribution of scenarios \citep{shapiro2021lectures}, and chance-constrained programming requires that constraints hold with high probability, replacing an intractable probabilistic constraint by a tractable convex surrogate \citep{nemirovski2006convex}. These methods \emph{design} a decision that is feasible or near-optimal with a prescribed probability; by contrast, we do not design a decision but certify, in closed form, the violation probability of one that a deployed pipeline has already produced. Where chance-constrained programming builds conservatism into the solution ex ante, we measure exposure ex post.

Closest in spirit is the \emph{scenario approach} to robust and chance-constrained design \citep{calafiore2006scenario, campi2008exact}, which samples constraints and bounds the residual probability that the resulting decision is infeasible. Its guarantees, like ours, concern a violation probability, but they hold \emph{by construction} of the decision and improve with the sample size, whereas our certificate evaluates an \emph{already-fixed} decision with no sampling at all. Robust optimization likewise immunizes a decision against an entire uncertainty set \citep{bental1998robust, bertsimas2004price}, trading a probabilistic statement for a worst-case guarantee. In all of these, uncertainty enters when the decision is \emph{made}. However in ours, the decision is made and we assess it afterward. This audit-versus-design distinction is what lets the local rate admit a closed form: the decision, and hence the geometry of the violation set, is fixed before the analysis begins.

Recent studies have explored generative modeling methods for learning data distributions, particularly to improve representations of rare events, extreme samples, and tail behaviors. For example, ExGAN first employs a distribution shifting approach to increase the proportion of extreme samples in the training data, and then uses Extreme Value Theory to model the shifted data distribution, from which the generator learns to produce realistic and extreme samples \citep{bhatia2021ExGANA}. Tail-GAN focuses on learning financial scenario distributions that preserve critical tail properties, namely Value at Risk (VaR) and Expected Shortfall (ES). By leveraging the joint elicitability of the (VaR, ES) pair, Tail-GAN uses a strictly consistent scoring function as an adversarial objective, encouraging the generator to produce scenarios that reproduce the tail behavior of the training data distribution \citep{cont2026TailGAN}. Although these approaches are effective in generating rare-event scenarios, they fundamentally differ from our objective. ExGAN relies on Extreme Value Theory to characterize the distribution of extreme regions and trains a generator to produce extreme samples. 
Tail-GAN obtains the rare events from the learned generator through scenario generation. However, our approach directly characterize and sample from the local violation distribution associated with the PTO pipeline.

Beyond generative modeling, uncertainty quantification has also been investigated in optimization. \citet{chanU2026Uncertainty} proposed a hierarchical Bayesian framework in data-driven inverse optimization. Rather than providing only point estimates for the unknown parameters, their approach characterized parameter uncertainty through posterior distributions and constructed credible regions around the estimates. The authors further developed two Markov chain Monte Carlo algorithms to sample from the distribution under two data-generating processes, enabling practical implementation. Although both their work and ours aim to characterize the uncertainty through distributional representation, they focus on uncertainty arising from model parameters. In contrast, our work considers uncertainty induced by perturbations in the input space and aims to characterize the conditional distribution of inputs that lead to violation. 


With our current work, we make the following contributions to the literature:

\begin{sibitemize}
    \item We prove that any method based on independent random samples needs, on average, a number of samples that grows in inverse proportion to the violation rate in order to reliably detect a violation. Rare violations are therefore prohibitively expensive to certify by sampling alone, which motivates the structural approach that we study.
    \item For a fixed deployed input subject to Gaussian forecast uncertainty with an arbitrary covariance, we show that the local violation rate has a closed form. It is obtained by evaluating the standard normal distribution at a single distance that measures how far the deployed decision sits from the violation boundary, relative to the shape of the uncertainty. Computing it requires only one optimization solve and one scalar Gaussian evaluation, with an explicitly bounded approximation error.
    \item The distribution of violating inputs admits an exact sampler that draws independent samples directly with no Markov chain machinery, burn-in, thinning, or boundary handling. The direction of greatest sensitivity yields feature-level attribution, and together the distance-to-boundary and the violation rate form an individual, per-decision violation certificate. 
    \item We demonstrate the method on an illustrative application example of economic-dispatch pipeline under an emissions cap. This application shows that the violation certificate is derived end-to-end from the cost structure and the active dispatch basis rather than posited, and that it yields an interpretable, per-decision compliance audit.
\end{sibitemize}

\section{Problem Setting.} 
\label{sec:problem_setting}

Let $\CX \subseteq \RR^p$ denote the input space and $X$ be a
random input drawn from a distribution $P$ supported on $\CX$.  A \emph{PTO pipeline} is a composition
\[
    X ~ \xrightarrow{\text{predictor}} ~ \theta(X) ~ \xrightarrow{\text{optimizer}} ~ z(\theta(X)),
\]
where $\theta : \CX \to \RR^d$ is a predictive model and $z : \RR^d \to \RR^m$ is the argmin map of a parametric mathematical program. 

\subsection{Violation Condition.}
\label{subsec:violation}

A \emph{violation condition} is a measurable predicate $V : \CX \to \{0,1\}$ that flags whether the pipeline's decision on input $x$ is undesirable:
\begin{equation}
    \label{eq:violation-indicator}
    V(x) = \mathbf{1}\bigl[g(x, z(\theta(x))) \geq 0\bigr],
\end{equation}
where $g : \CX \times \RR^m \to \RR$ is a function encoding the violation criterion. Typical examples include a resource allocation decision exceeding available capacity or a decision variable falling outside a prescribed range designated by regulations. The \emph{violation set} then becomes set of inputs that trigger a violation:
\begin{equation}
    \label{eq:violation-set}
    S = \bigl\{x \in \CX : V(x) = 1\bigr\} 
    = \bigl\{x \in \CX : g(x, z(\theta(x))) \geq 0\bigr\}.
\end{equation}

At this point one might ask why the violation condition is not simply added as an additional constraint to the downstream optimization problem, thereby eliminating violations by construction. There are several reasons why this is often not possible or not appropriate. First, the violation condition is typically defined \emph{externally} to the pipeline (by a regulator, auditor, or affected group) and may be imposed after the pipeline is already in production. The auditor does not have the authority or access to modify the optimization problem. Second, adding the condition as a hard constraint may render the problem \emph{infeasible} for some inputs, or degrade the objective in ways the operator cannot accept. Third, and most fundamentally, the goal is not to eliminate violations but to \emph{characterize their distribution}: which inputs are most at risk, how concentrated violations are, and what structural features of the pipeline drive them. This diagnostic information is precisely what informs whether and how the pipeline should be revised, and forms the evidence base for remedial or regulatory action. Finally, the condition may depend on \emph{sensitive attributes} (for example in fairness settings) that are legally excluded from the optimization, so that the optimizer cannot be constrained on variables it does not observe, even though the condition is defined in terms of their effect on outcomes.

\subsection{Violation Rate and Violation Distribution.}
\label{subsec:violation-dist}

The \emph{violation rate} is the probability that a randomly drawn input triggers a violation:
\begin{equation}
    \label{eq:violation-rate}
    \nu := P(S) = \mathbb{E}_P[V(X)].
\end{equation}
The violation rate is the quantity estimated by scenario generation. While the violation rate $\nu$ characterizes the prevalence of violations, the certification goals require characterizing the \emph{violation distribution}. Thus, the object of central interest in this paper is the conditional distribution of the input given that a violation occurs. That is,
\begin{equation}
    \label{eq:violation-dist}
    Q := P(\cdot \mid X \in S).
\end{equation}
The violation distribution $Q$ characterizes the full geometry of the violation region as seen through the lens of $P$: it assigns high probability to inputs that are both likely under $P$ and deep inside $S$, and low probability to inputs that are unlikely or only marginally violating. Samples from $Q$ can be used directly for explanation and uncertainty quantification, the two tasks that were first described in Section \ref{sec:introduction}:
\begin{enumerate}
    \item \emph{Explanation. } Produce a set of samples $x_1, \dots, x_N \overset{\text{approx}}{\sim} Q$. These samples answer questions of the form: \emph{what does a typical violating input look like}, and \emph{which input features are most predictive of a violation?}
    \item \emph{Uncertainty quantification.} Characterize the \emph{concentration} and \emph{spread} of the violation distribution $Q$ by estimating summary statistics such as the mean and the covariance of violating inputs. These quantities answer questions of the form: \emph{how concentrated are violations in the input space}, and \emph{are violations driven by a single feature or a combination of features?}
\end{enumerate}

The two tasks above can be pursued at two complementary levels of granularity. In \emph{global} violation characterization, $P$ is the full population distribution and $Q = P(\cdot \mid X \in S)$ describes systemic risk across the entire input space. In \emph{local} violation characterization, $P$ is replaced by a perturbation distribution centered at a specific reference input $x_0$, and the violation distribution describes the robustness of the pipeline's decision at $x_0$ to input uncertainty. The two levels correspond to two distinct certification actors: a regulator assessing aggregate behavior across a population, and an individual (or their advocate) assessing the fragility of a specific decision. This paper focuses on the \emph{local} characterization, for which both certification tasks above admit a closed-form solution with no sampling. 

\section{Hardness of Scenario Generation.}
\label{sec:hardness}

Scenario generation estimates the violation rate $\nu$ by drawing $M$ independent samples $x_1, \dots, x_M \overset{\text{i.i.d.}}{\sim} P$ and computing the sample fraction
\begin{equation}
    \label{eq:sg-estimator}
    \hat{\nu}_M := \frac{1}{M} \sum_{i=1}^M V(x_i)
    = \frac{1}{M} \sum_{i=1}^M \mathbf{1}[x_i \in S].
\end{equation}
This is the natural unbiased estimator of $\nu$, and it is the basis of virtually all practical compliance checks in predict-then-optimize pipelines. We now show that this estimator is fundamentally limited when violations are rare.

We measure the quality of an estimator by its ability to distinguish a pipeline with no violations from one with a small but nonzero violation rate. Formally, consider two hypotheses:
\begin{align}
    H_0 &: \nu = 0 \quad \text{(no violations)}, \label{eq:H0} \\
    H_1 &: \nu = \nu_0 \quad \text{(violation rate } \nu_0 > 0\text{)}, \label{eq:H1}
\end{align}
for some $\nu_0 > 0$ to be specified. Any useful certification procedure must be able to distinguish $H_0$ from $H_1$ with non-trivial probability.

\begin{theorem}[Sample complexity lower bound]
\label{thm:lower-bound}
Let $\hat{\nu}_M$ be any estimator of $\nu$ based on $M$ i.i.d.\ samples from $P$. For any $\nu_0 \in (0, 1/2]$ and any $\delta \in (0, 1/2)$, if
\begin{equation}
    \label{eq:M-condition}
    M < \frac{\log(1/2\delta)}{2\nu_0},
\end{equation}
then there exist two distributions $P_0$ and $P_1$ over $\CX$ with violation rates $0$ and $\nu_0$ respectively, such that no estimator based on $M$ i.i.d. samples can distinguish $H_0$ from $H_1$ with probability greater than $1 - \delta$. In particular, $\Omega(1/\nu_0)$ samples are necessary to detect a violation rate of $\nu_0$ with constant probability.
\end{theorem}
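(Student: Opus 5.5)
The plan is to use a two-point (Le Cam) argument with distributions differing only on a small violation set. Fix any measurable $S\subseteq\CX$ that can carry mass. Let $P_1$ put mass $\nu_0$ on some point $x^\star\in S$ and mass $1-\nu_0$ on a point $\bar x\notin S$. Let $P_0$ be the point mass at $\bar x$. Then $P_0(S)=0$ and $P_1(S)=\nu_0$, as required.

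The key observation is that under $P_1$, with probability $(1-\nu_0)^M$ every one of the $M$ samples equals $\bar x$. In that event the sample is identical in law to a sample from $P_0$. More formally, the total variation distance between the product measures satisfies
\[
\mathrm{TV}\bigl(P_0^{\otimes M},P_1^{\otimes M}\bigr)=1-(1-\nu_0)^M .
\]
This can be checked directly, since $P_0^{\otimes M}$ is a point mass at $(\bar x,\dots,\bar x)$, which $P_1^{\otimes M}$ charges with probability $(1-\nu_0)^M$.

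Next, apply the standard testing bound. For any test $\psi$ (any estimator thresholded into a decision),
\[
P_0(\psi=1)+P_1(\psi=0)\ \ge\ 1-\mathrm{TV}=(1-\nu_0)^M .
\]
So if both error probabilities were at most $\delta$, we would need $2\delta\ge(1-\nu_0)^M$. Equivalently, this is a bound on the larger of the two errors, which is the error that governs distinguishing "with probability greater than $1-\delta$". One can also argue directly: on the all-$\bar x$ event, $\psi$ outputs the same value under both hypotheses, so one of the two errors is at least $(1-\nu_0)^M$. That would even give the stronger requirement $\delta\ge(1-\nu_0)^M$, but the $2\delta$ version suffices.

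The remaining step, and the only place that needs care, is the elementary inequality converting $(1-\nu_0)^M$ into the stated threshold. For $\nu_0\in(0,1/2]$ we use $\log(1-\nu_0)\ge-2\nu_0$. This follows from concavity of $t\mapsto\log(1-t)$ on $[0,1/2]$, because the chord from $(0,0)$ to $(1/2,-\log 2)$ has slope $-2\log2>-2$. Hence $(1-\nu_0)^M\ge e^{-2\nu_0M}$. If $M<\log(1/2\delta)/(2\nu_0)$, then $e^{-2\nu_0M}>2\delta$, which contradicts the requirement derived above. Therefore no test achieves success probability at least $1-\delta$ under both hypotheses. The hypothesis $\delta<1/2$ ensures $\log(1/2\delta)>0$, so the threshold is non-vacuous.

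Finally, the $\Omega(1/\nu_0)$ claim follows by fixing any constant $\delta<1/2$, say $\delta=1/4$, which gives $M\ge\log 2/(2\nu_0)$. The main obstacle is really just bookkeeping. One must state that "estimator" means any measurable function of the samples, which is reduced to a test of $H_0$ vs $H_1$. One must also pick the support points so that $S$ and its complement are both nonempty in $\CX$, which is implicit in the setting.
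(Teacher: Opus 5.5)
Your proof is correct, and its core computation is the paper's: under $H_1$ the event that no sample lands in $S$ has probability $(1-\nu_0)^M \ge e^{-2M\nu_0} > 2\delta$, while under $H_0$ it has probability one. What differs is the packaging.

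The paper never writes down $P_0$ and $P_1$. It reasons only through the statistic $\hat{\nu}_M$, splitting on whether a rule declares $H_0$ or $H_1$ when $\hat{\nu}_M = 0$. It gets the elementary bound from $\log(1-t) \ge -t/(1-t)$ together with $1/(1-\nu_0) \le 2$.

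You instead fix explicit point masses, compute $\mathrm{TV}(P_0^{\otimes M}, P_1^{\otimes M}) = 1-(1-\nu_0)^M$, and invoke Le Cam's two-point bound. You obtain $\log(1-t) \ge -2t$ from the chord of the concave function $\log(1-t)$ on $[0,1/2]$.

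The paper's route is shorter, and its case split directly shows that the worst-case error exceeds $2\delta$; your all-$\bar x$ remark recovers exactly this. Your route, however, is what actually justifies ``any estimator.'' A general estimator sees the non-violating samples themselves, not just whether $\hat{\nu}_M = 0$. The argument therefore needs $P_1$ conditioned on $\CX \setminus S$ to coincide with $P_0$, which your construction guarantees and the paper leaves implicit. The TV bound also handles randomized tests. Finally, your observation that $S$ and $\CX \setminus S$ must both be nonempty makes explicit a hypothesis the paper uses silently.
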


\begin{proof}
Under $H_0$, no sample falls in $S$, so $\hat{\nu}_M = 0$ with probability one. Under $H_1$, each sample falls in $S$ independently with probability $\nu_0$, so the probability that \emph{no} sample falls in $S$ is $(1 - \nu_0)^M$. We lower-bound this quantity in two steps: (i) For $t \in [0,1)$, the inequality $\log(1-t) \geq -t/(1-t)$ holds. Applying this with $t = \nu_0$ and multiplying by $M > 0$ yields
\begin{equation}
\label{eq:step1}
(1 - \nu_0)^M \geq \exp\left(-\frac{M\nu_0}{1-\nu_0}\right).
\end{equation}
(ii) Since $\nu_0 \leq 1/2$, we have $1 - \nu_0 \geq 1/2$, so $1/(1-\nu_0) \leq 2$, and therefore we obtain
\begin{equation}
\label{eq:step2}
\exp\left(-\frac{M\nu_0}{1-\nu_0}\right) \geq \exp(-2M\nu_0).
\end{equation}
Combining now \eqref{eq:step1} and \eqref{eq:step2} leads to
\begin{equation}
    \label{eq:no-violation}
    P_1\left(\hat{\nu}_M = 0\right) = (1 - \nu_0)^M \geq \exp(-2M\nu_0).
\end{equation}
If $M < \log(1/2\delta)/(2\nu_0)$, then $2M\nu_0 < \log(1/2\delta)$, so $\exp(-2M\nu_0) > \exp(-\log(1/2\delta)) = 2\delta$, and therefore $P_1(\hat{\nu}_M = 0) > 2\delta$.

Since $\hat{\nu}_M = 0$ under $H_0$ with probability one and under $H_1$ with probability greater than $2\delta$, any decision rule that declares $H_0$ when $\hat{\nu}_M = 0$ incurs error probability greater than $2\delta$ under $H_1$, and any rule that declares $H_1$ when $\hat{\nu}_M = 0$ incurs error probability one under $H_0$. Therefore no estimator can achieve error probability below $\delta$ under both hypotheses simultaneously, establishing the lower bound. The asymptotic statement $\Omega(1/\nu_0)$ follows immediately from \eqref{eq:M-condition}.
\end{proof}

Theorem \ref{thm:lower-bound} formalizes the first limitation of scenario generation. We now state aforementioned two limitations precisely, as they motivate our approach in the subsequent sections:
\begin{enumerate}
    \item \emph{Prohibitive sample complexity.} By Theorem \ref{thm:lower-bound}, any i.i.d.-based approach requires $\Omega(1/\nu_0)$ samples to reliably detect violations. This is a \emph{fundamental} limitation: it applies to any estimator based on i.i.d. samples from $P$, not just the sample fraction $\hat{\nu}_M$. The only way to overcome it is to use a \emph{non-i.i.d.} sampling strategy that actively guides samples into the violation region $S$.
    
    \item \emph{No characterization of the violation distribution.} Even if $M$ is large enough to observe violations, the samples $\{x_i : x_i \in S\}$ collected by scenario generation are distributed according to $Q = P(\cdot \mid X \in S)$ only in the limit $M \to \infty$. For finite $M$, the expected number of violation samples is $M\nu_0$, which is $\mathcal{O}(1)$ when $M = \mathcal{O}(1/\nu_0)$. A handful of violation samples is insufficient to characterize the violation distribution $Q$ and estimate its mean, covariance, or marginals to any useful accuracy.
\end{enumerate}

It is tempting to hope that the barrier is an artifact of asking a population-level question, and that restricting attention to a single deployed input makes the problem easy. It does not. Consider the \emph{local} reliability question: fix a reference input $x_0$ (the operating point of the pipeline as deployed) and model input uncertainty by a perturbation distribution $P_{\mathrm{local}}$ centered at $x_0$ (defined formally in Section \ref{sec:local}). The quantity of interest is the local violation rate $P_{\mathrm{local}}(X \in S)$. Since Theorem \ref{thm:lower-bound} holds for \emph{any} input distribution, it applies verbatim to $P_{\mathrm{local}}$: detecting a local violation rate of $\nu_0$ by scenario generation still requires $\Omega(1/\nu_0)$ perturbed evaluations, and local violations are typically rare precisely when the deployed decision is robust (the regime an auditor most wants to certify). Localizing the question therefore does not, by itself, rescue sampling where each such draw is a full pipeline evaluation, i.e., an optimization solve. Yet, our subsequent discussion shows that the local question nonetheless can admit a closed-form answer that sidesteps sampling entirely. This is achieved by exploiting the structure of the linear pipeline rather than probing it with draws.

\section{Linear Pipeline and Optimal-Solution Geometry.}
\label{sec:linear}

We specialize to a setting in which all three components of the pipeline, i.e., the predictor, the optimizer, and the violation condition, are linear. This covers a range of practically relevant pipelines in operations research, and it is the setting in which the geometry of the optimal-solution map can be described exactly.

We assume that the predictor is an affine map of the form
\begin{equation}
    \label{eq:linear-predictor}
    \theta(x) = Bx + b,
\end{equation}
where $B \in \RR^{d \times p}$ is a weight matrix and $b \in \RR^d$ is a bias vector. Affine predictors arise naturally as ordinary least squares regressors, ridge regressors, or their weighted  ensembles. Given the predicted parameter vector $\theta(x)$, the optimizer solves the linear program (LP):
\begin{equation}
    \label{eq:lp}
    z(x) := \operatorname*{arg min}_{z \in \RR^m}  c^\top z
    \quad \text{subject to} \quad Az \le \theta(x),
\end{equation}
where $c \in \RR^m$ is a fixed objective vector and $A \in \RR^{d \times m}$ is a fixed constraint matrix. The right-hand side of the constraints is the predicted parameter vector $\theta(x)$, so the feasible region varies with the input $x$ through the predictor. This formulation captures a broad class of resource allocation, scheduling, and routing problems in which the constraint capacities or demands are predicted from data. 

To obtain the local characterization, we next focus our attention to a deployed pipeline observing a fix reference input $x_0$ and implementing the decision $z(x_0)$.

\begin{assumption}[Boundedness and non-degeneracy]
\label{ass:nondeg}
At the reference input $x_0 \in \CX$, the LP \eqref{eq:lp} is bounded below and attains a unique optimal solution $z(x_0)$ at a non-degenerate vertex of the feasible polyhedron $\{z : Az \le \theta(x_0)\}$.
\end{assumption}

The boundedness clause is not automatic: the variables $z$ are free and the constraints are one-sided, so the LP \eqref{eq:lp} can be unbounded unless the constraint normals (the rows of $A$) positively span $\RR^m$. Assumption \ref{ass:nondeg} rules this out at $x_0$ and is all the local analysis requires; we state it point-wise rather than for every $x \in \CX$. Now, under Assumption \ref{ass:nondeg}, the optimal solution is determined by a unique
active \emph{basis} $\mathcal{B}(x) \subseteq \{1,\dots,d\}$ with
$|\mathcal{B}(x)| = m$, and is the basic feasible solution
\begin{equation}
    \label{eq:bfs}
    z(x)  =  A_{\mathcal{B}(x)}^{-1} \theta_{\mathcal{B}(x)}(x),
\end{equation}
where $A_{\mathcal{B}(x)}$ and $\theta_{\mathcal{B}(x)}$ are the submatrix and subvector of
$A$ and $\theta$ indexed by $\mathcal{B}(x)$. 

We consider a violation condition that is also linear in the optimal decision $z(x)$:
\begin{equation}
    \label{eq:linear-violation}
    V(x) = \mathbf{1}\bigl[w^\top z(x) \geq \gamma\bigr],
\end{equation}
where $w \in \RR^m$ is a fixed weight vector and $\gamma \in \RR$ is a threshold. This covers the fairness, capacity, and regulatory examples of
Section \ref{subsec:violation} when the violation criterion is linear in the decision. Below we give two examples to illustrate linear pipelines. 

\begin{example}[Task assignment with group-fairness compliance]
  \label{ex:fairness}
  A predictive model estimates task-completion times $\theta_i(x) = (Bx+b)_i$ for each worker $i$ from observable features $x$. An assignment LP minimizes total completion time subject to capacity constraints $A z \le \theta(x)$, with LP relaxation having integral optimal solutions for standard assignment matrices.  Let $D \in \RR^{m \times m}$ be the burden matrix mapping decisions to individual workloads. Define $u_r = |g|\inv \sum_{i \in g} D_i\tr  \in \RR^m$ as the average burden row for group $r$, and $u = n\inv \sum_{i=1}^m D_i\tr  \in \RR^m$ as the overall average burden row, so that $\bar{b}_r(z) = u_r\tr  z$ and $\bar{b}(z) = u\tr  z$. The compliance requirement is \emph{demographic parity}: the mean task-burden assigned to each protected group $g$ does not exceed the overall mean by more than $\tau > 0$:
  \[
    V_{\mathrm{fair}}(x) = \mathbf{1}\left[\max_r (u_r - u)\tr  z(x) \geq \tau \right].
  \]
\end{example}

\begin{example}[Resource allocation with minority service-level compliance]
  \label{ex:robust}
  A predictive model estimates demand $\theta_i(x)$ at each of $m$ customer locations.  An LP allocates limited supply subject to $A z \le \theta(x)$. Define $e_\CM = \sum_{i \in \CM} e_i \in \RR^m$ as the indicator vector for the minority segment $\CM \subset [m]$, where $e_i$ is the $i$-th standard basis vector.  The compliance requirement is a \emph{service-level guarantee}: the total allocation to $\CM$ must cover at least fraction $\rho$ of its demand:
  \[
    V_{\mathrm{level}}(x) = \mathbf{1}\left[e_\CM\tr  z(x) \le \rho e_\CM\tr \theta(x)\right].
  \]
\end{example}

The combination of the affine predictor \eqref{eq:linear-predictor} and the LP \eqref{eq:lp} makes the optimal solution $z(x)$ affine in a neighborhood of any fixed input. This single-region fact is all the local analysis requires.

\begin{lemma}[Affine structure]
\label{lem:local-affine}
Under Assumption \ref{ass:nondeg}, fix the reference input $x_0 \in \CX$ and let $\mathcal{B}_{0} := \mathcal{B}(x_0)$ be the optimal basis at $x_0$. Then, there is a polyhedron $\CX_0$ with $x_0 \in \CX_0$ on which $\mathcal{B}_0$ remains optimal, and the optimal solution is affine in $x$:
\begin{equation}
    \label{eq:pwa}
    z(x) = C_0 x + d_0, \qquad x \in \CX_0,
\end{equation}
where $C_0 := A_{\mathcal{B}_0}^{-1} B_{\mathcal{B}_0} \in \RR^{m \times p}$ and $d_0 := A_{\mathcal{B}_0}^{-1} b_{\mathcal{B}_0} \in \RR^m$. Moreover non-degeneracy at $x_0$ places $x_0$ in the \emph{interior} of $\CX_0$, so $\CX_0$ is a neighborhood of $x_0$.
\end{lemma}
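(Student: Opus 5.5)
The plan is to describe $\CX_0$ by the two standard LP optimality conditions for a fixed basis, primal feasibility and dual feasibility, and then to note that only primal feasibility depends on $x$.

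\emph{Dual feasibility.} For the LP \eqref{eq:lp}, written as $\min c^\top z$ subject to $Az \le \theta$, a basis $\mathcal{B}$ with $A_{\mathcal{B}}$ invertible is optimal when two conditions hold:
\begin{itemize}
\item primal feasibility: the basic solution $\bar z = A_{\mathcal{B}}^{-1}\theta_{\mathcal{B}}$ satisfies $A_N \bar z \le \theta_N$ for the nonbasic rows $N = \{1,\dots,d\}\setminus\mathcal{B}$;
\item dual feasibility: there is a multiplier $y_{\mathcal{B}} \ge 0$ with $A_{\mathcal{B}}^\top y_{\mathcal{B}} = -c$, i.e.\ $y_{\mathcal{B}} = -A_{\mathcal{B}}^{-\top} c \ge 0$.
\end{itemize}
Assumption \ref{ass:nondeg} says $z(x_0)$ is a non-degenerate optimal vertex. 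So exactly $m$ constraints are active, namely $\mathcal{B}_0$, and $A_{\mathcal{B}_0}$ is invertible. By KKT for LPs, optimality of $z(x_0)$ gives multipliers supported on the active set. These are exactly $y_{\mathcal{B}_0} = -A_{\mathcal{B}_0}^{-\top}c \ge 0$, so dual feasibility holds. The key remark is that this condition involves only $c$ and $A$, not $x$. Dual feasibility of $\mathcal{B}_0$ therefore holds for every $x$.

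\emph{Defining $\CX_0$ and the affine formula.} Set
\[
\CX_0 := \bigl\{x \in \RR^p : A_N A_{\mathcal{B}_0}^{-1}(B_{\mathcal{B}_0}x + b_{\mathcal{B}_0}) \le B_N x + b_N\bigr\},
\]
intersected with $\CX$ if one insists $\CX_0 \subseteq \CX$. This is a finite system of linear inequalities in $x$, so it is a polyhedron. For $x \in \CX_0$, the point $A_{\mathcal{B}_0}^{-1}\theta_{\mathcal{B}_0}(x)$ is feasible and, by the dual certificate $y_{\mathcal{B}_0}$, attains the dual bound. It is therefore optimal, giving \eqref{eq:pwa} with $C_0 = A_{\mathcal{B}_0}^{-1}B_{\mathcal{B}_0}$ and $d_0 = A_{\mathcal{B}_0}^{-1}b_{\mathcal{B}_0}$. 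At $x_0$ these inequalities hold because $z(x_0)$ is feasible, so $x_0 \in \CX_0$.

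\emph{Interior.} Non-degeneracy means the nonbasic constraints are strictly slack at $x_0$: $A_N z(x_0) < \theta_N(x_0)$. Each defining inequality of $\CX_0$ is a continuous (affine) function of $x$ and is strict at $x_0$, so it stays strict on a ball around $x_0$. Hence $x_0 \in \operatorname{int}\CX_0$. If $\CX$ is a proper subset of $\RR^p$, this gives interiority relative to $\CX$, or one assumes $x_0 \in \operatorname{int}\CX$.

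\emph{Main obstacle.} The subtle point is uniqueness of the optimal solution on $\CX_0$, since \eqref{eq:pwa} calls $z(x)$ ``the'' argmin. The formula shows that $A_{\mathcal{B}_0}^{-1}\theta_{\mathcal{B}_0}(x)$ is an optimal solution; uniqueness for $x \ne x_0$ needs a further argument.
\begin{itemize}
\item \emph{First attempt:} show the dual multiplier is strictly positive. Uniqueness at $x_0$ should force $y_{\mathcal{B}_0} > 0$, since a zero component would permit an optimal ray or edge along which that constraint is relaxed. Then strict complementarity makes the vertex the unique optimum for all $x$ in $\CX_0$.
\item \emph{Fallback:} if that argument is too delicate, define $z(x)$ on $\CX_0$ as the basic optimal solution given by $\mathcal{B}_0$, which is all the subsequent local analysis uses.
\end{itemize}
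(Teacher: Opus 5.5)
Your proposal is correct and follows essentially the paper's proof. Dual feasibility of $\mathcal{B}_0$ involves only $c$ and $A$; primal feasibility of the nonbasic rows cuts out the polyhedron $\CX_0$ (the paper writes it as the pre-image of the cone $\{\theta : M\theta \le 0\}$ under $\theta(x) = Bx + b$); and strict slackness from non-degeneracy gives interiority. Your extra uniqueness step, which the paper passes over, is sound, so the fallback is unnecessary: a zero component of $y_{\mathcal{B}_0}$ would give an optimal edge leaving the non-degenerate vertex, so $y_{\mathcal{B}_0} > 0$, and complementary slackness with this fixed dual optimum pins every optimum at $x \in \CX_0$ to $A_{\mathcal{B}_0}^{-1}\theta_{\mathcal{B}_0}(x)$.
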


\begin{proof}
We first describe the set of right-hand sides $\theta$ for which $\mathcal{B}_0$ remains optimal. Optimality of a basis is the conjunction of primal feasibility and dual feasibility. The dual-feasibility (reduced-cost) conditions depend only on the cost $c$ and the matrix $A$, both fixed here; they do \emph{not} involve $\theta$, so they hold throughout once they hold at $x_0$. Optimality therefore reduces to primal feasibility of $\mathcal{B}_0$, namely that the basic solution $A_{\mathcal{B}_0}^{-1}\theta_{\mathcal{B}_0}$ satisfies the $d-m$ non-basic constraints $A_i z \le \theta_i$, $i \notin \mathcal{B}_0$ (the $m$ basic constraints hold with equality by construction). Substituting the basic solution into each of these, primal feasibility requires $A_i A_{\mathcal{B}_0}^{-1}\theta_{\mathcal{B}_0} \le \theta_i$ for $i \notin \mathcal{B}_0$, a single linear inequality in $\theta$. Stacking these $d-m$ inequalities gives a matrix $M \in \RR^{(d-m)\times d}$, and being homogeneous in $\theta$ they define a polyhedral cone $\{\theta : M\theta \le 0\}$ in $\RR^d$.

The pre-image of a polyhedron under an affine map is again a polyhedron: substituting $\theta(x) = Bx + b$ into $M\theta \le 0$ yields the polyhedron $\CX_0 := \{x \in \RR^p : (MB) x \le -Mb\}$. Since $\mathcal{B}_0$ is optimal at $x_0$ and, by non-degeneracy, the primal-feasibility inequalities hold there \emph{strictly}, $x_0$ lies in the interior of $\CX_0$, which is hence a neighborhood of $x_0$.

On $\CX_0$ the basis is fixed at $\mathcal{B}_0$, so substituting $\theta(x) = Bx + b$ into the basic feasible solution \eqref{eq:bfs} gives 
\[
z(x) = A_{\mathcal{B}_0}^{-1}\theta_{\mathcal{B}_0}(x)
= A_{\mathcal{B}_0}^{-1}\bigl(B_{\mathcal{B}_0} x + b_{\mathcal{B}_0}\bigr)
= C_0 x + d_0,
\]
which is \eqref{eq:pwa}.
\end{proof}

Lemma \ref{lem:local-affine} is a statement about a \emph{deterministic} input: on the neighbourhood $\CX_0$ the optimal solution is the affine map $z(x) = C_0 x + d_0$. Once we perturb the reference input in Section \ref{sec:local}, this map turns the random input into a random optimal solution $Z_0(X) = C_0 X + d_0$, whose conditional law given a violation is the object we characterize.

\section{Violation Characterization.}
\label{sec:local}

We now develop the central result for local violation certification. We shall next show for the operating point of a deployed pipeline (i.e., a fixed reference input) and a Gaussian model of input uncertainty that the local violation rate and the local violation distribution are available in closed form.

\subsection{Perturbation Model.} 
\label{subsec:local-setup}

Let $x_0 \in \CX$ be a fixed reference input, and model perturbations as
\begin{equation}
    \label{eq:local-dist}
    X = x_0 + \xi, \quad \xi \sim \mathcal{N}(0, \Sigma), \quad \Sigma \succ 0,
\end{equation}
so that the local input distribution is $P_{\mathrm{local}} = \mathcal{N}(x_0, \Sigma)$, i.e., Gaussian with mean $x_0$ and covariance $\Sigma$. Here, $\Sigma$ encodes the scale \emph{and} the directional structure of the input uncertainty (measurement noise, estimation error, or a prescribed stress model). 

Under this perturbation, the optimal solution becomes a \emph{random vector}
\begin{equation}
    \label{eq:Z0}
    Z_0(X) := \arg\min_{z \in \RR^m} ~ c^\top z
    \quad \text{subject to} \quad Az \le BX + b.
\end{equation}
Then, the violation set in input space becomes
\begin{equation}
    \label{eq:Sdef}
    S = \bigl\{x \in \CX : w^\top Z_0(x) \ge \gamma\bigr\},
\end{equation}
which leads to the violation event $\{X \in S\} \equiv \{w^\top Z_0(X) \ge \gamma\}$. The \emph{local violation distribution} is the conditional law of the input given a violation:
\begin{equation}
    \label{eq:local-viol-dist}
    Q_{\mathrm{local}} := P_{\mathrm{local}}(\cdot \mid X \in S)
    = \mathcal{N}(x_0, \Sigma)(\cdot \mid X \in S),
\end{equation}
which is a distribution on input space $\RR^p$. This is the object we shall characterize.

On the basis region $\CX_0$ of Lemma \ref{lem:local-affine} the map \eqref{eq:Z0} is the affine piece $Z_0(X) = C_0 X + d_0$. Thus, the random optimal solution itself is also Gaussian over the same region:
\begin{equation}
    \label{eq:Z0-gaussian}
    Z_0(X) \sim \mathcal{N}\bigl(C_0 x_0 + d_0, ~C_0 \Sigma C_0^\top\bigr)
    ~ \text{on } \CX_0.
\end{equation}
The scalar violation margin then reduces to $w^\top Z_0(X) \sim \mathcal{N}\bigl(w^\top(C_0 x_0 + d_0),w^\top C_0 \Sigma C_0^\top w\bigr)$. This is exactly the scalar Gaussian that drives the closed-form rate in the next section.

\subsection{The Active Basis Region.}
\label{subsec:local-component}

Solving the LP \eqref{eq:lp} at $\theta(x_0) = Bx_0 + b$ yields a unique optimal basis $\mathcal{B}(x_0)$ under Assumption \ref{ass:nondeg}, and hence identifies a polyhedral neighborhood $\CX_0$ via Lemma \ref{lem:local-affine}. The Gaussian $\mathcal{N}(x_0, \Sigma)$ concentrates around $x_0$, and the contribution of all other basis regions to the local violation distribution is controlled by the probability of leaving $\CX_0$. Using the definition of $\CX_0$ from the proof of Lemma \ref{lem:local-affine}, we define $n_j^\top := (MB)_j$ for the $j$-th row of $MB$ and $h_j := (-Mb)_j$ for the $j$-th component of $-Mb$. Then, we list the facets as $\CX_0 = \{x \in \RR^p: n_j^\top x \le h_j,\ j = 1,\dots,J\}$ with $J \le d-m$ and observe that $x_0$  resides on the interior of $\CX_0$. For each facet, the scalar $n_j^\top X \sim \mathcal{N}(n_j^\top x_0, n_j^\top \Sigma n_j)$, so the \emph{Mahalanobis margin} of $x_0$ to facet $j$ is $\rho_j(x_0) := (h_j - n_j^\top x_0)/\sqrt{n_j^\top \Sigma n_j}$. 

Leaving $\CX_0$ means violating at least one facet, so the exit event is the union $\{X \notin \CX_0\} = \bigcup_{j=1}^{J}\{n_j^\top X > h_j\}$. A union bound gives $P_{\mathrm{local}}(X \notin \CX_0) \le \sum_{j=1}^{J} P_{\mathrm{local}}(n_j^\top X > h_j)$, and standardizing the univariate Gaussian projection yields $P_{\mathrm{local}}(n_j^\top X > h_j) = \Phi(-\rho_j(x_0))$. Combining with the Gaussian tail inequality, we obtain
\begin{equation}
    \label{eq:exit-prob}
    P_{\mathrm{local}}\bigl(X \notin \CX_0\bigr) 
    \leq  \sum_{j=1}^{J} \Phi \bigl(-\rho_j(x_0)\bigr)
    \leq  J \exp \left(-\tfrac{1}{2} \rho_{\min}(x_0)^2\right),
\end{equation}
where $\rho_{\min}(x_0) := \min_{1 \le j \le J} \rho_j(x_0)$. The first inequality is not an equality because the facet-violation events overlap. It is tightest when the nearest facet dominates, the regime quantified by $\rho_{\min}$. When every facet is $\Sigma$-standard-deviations away (i.e., $\rho_{\min}(x_0)$ large), this probability is negligible. We formalize this as a working assumption.

\begin{assumption}[Small perturbation]
\label{ass:small-sigma}
The covariance $\Sigma$ is such that the smallest Mahalanobis margin satisfies $\rho_{\min}(x_0) \ge \sqrt{2\log(J/\varepsilon_0)}$ for a prescribed tolerance $\varepsilon_0 \in (0,1)$, so that, by \eqref{eq:exit-prob}, $P_{\mathrm{local}}(X \notin \CX_0) \leq \varepsilon_0$. When $\Sigma = \sigma^2 I$ (isotropic case), this reads $\sigma \le \delta(x_0)/\sqrt{2\log(J/\varepsilon_0)}$, where $\delta(x_0) = \min_j (h_j - n_j^\top x_0)/\|n_j\|$ is the Euclidean distance from $x_0$ to the nearest facet.
\end{assumption}

The bound \eqref{eq:exit-prob} is a union bound over the $J \le d-m$ facets of $\CX_0$. It is loose when facets are nearly parallel or far from $x_0$, in which case the admissible covariance is correspondingly larger than the sufficient condition suggests. The binding quantity is the \emph{nearest} facet, through $\rho_{\min}(x_0)$: Assumption \ref{ass:small-sigma} asks that $x_0$ be comfortably interior to its basis region, measured in the $\Sigma$-metric. The required margin grows only like $\sqrt{\log(J/\varepsilon_0)}$, so for the \emph{rate}, which incurs an additive $\mathcal{O}(\varepsilon_0)$ error, the assumption is mild in non-degenerate instances: a margin of a few standard deviations suffices, regardless of how rare violations are. It binds only near vertex degeneracy, where the basis region is thin.

This degenerate regime is, however, exactly the one in which the assumption \emph{should} bind. Let $d^*_\Sigma(x_0)$ denote the Mahalanobis signed distance from $x_0$ to the violation boundary. When $\rho_{\min}(x_0) \lesssim d^*_\Sigma(x_0)$, a perturbation reaching the violation boundary must cross a basis-change facet, so the re-solved optimal decision $z(X)$ moves to a neighboring vertex with its own affine piece and violation half-space. The committed decision $z(x_0)$ is unchanged, but the decision \emph{at the perturbed input} switches. The violation distribution is then a genuine mixture over basis regions which no single-region characterization, closed-form or sampled, can capture exactly. Assumption \ref{ass:small-sigma} thus delimits precisely the regime in which a single-region treatment is the right object, rather
than restricting the method within it.

Under Assumption \ref{ass:small-sigma}, the local violation distribution is the single-component conditional up to a controlled error (quantified formally in the next two sections): 
\begin{equation}
    \label{eq:local-Qk0}
    Q_{\mathrm{local}}  = \mathcal{N}(x_0, \Sigma)\bigl(\cdot \mid X \in S_0\bigr)
     +  (\text{error of order } \varepsilon_0),
\end{equation}
where restricting the linear violation condition \eqref{eq:linear-violation} to the region $\CX_0$ via \eqref{eq:pwa} leads the local violation set to the half-space intersection
\begin{equation}
    \label{eq:Sk0}
    S_0  =  \bigl\{x \in \CX_0 : a^\top x \geq b_0\bigr\},
    \qquad
    a  :=  (w^\top C_0)^\top \in \RR^p, \quad
    b_0  :=  \gamma - w^\top d_0.
\end{equation}

With \eqref{eq:Sk0}, we have a complete polyhedral description of the violation set; i.e., the half-space $\{a^\top x \ge b_0\}$ intersected with the $J \le d-m$ facets $\{n_j^\top x \le h_j\}$ of $\mathcal{X}_0$ from Lemma \ref{lem:local-affine}. One might conclude that the problem is solved, since the region is known explicitly. However, this is not the case because an audit needs to report not a region but two probability-based quantities: (i) the rate $\nu_{\mathrm{local}} = \mathbb{P}_{X \sim \mathcal{N}(x_0, \Sigma)}(X \in S_0) = \int_{S_0} \phi_{x_0, \Sigma}$ where $\phi_{x_0, \Sigma}$ is the $\mathcal{N}(x_0, \Sigma)$ density, and (ii) the violation distribution $Q_{\mathrm{local}}$. The polyhedral description with inequalities supplies neither. On the one hand, knowing the inequalities yields only a membership oracle (testing $x \in S_0$). What makes the present setting tractable is a structural simplification: under Assumption \ref{ass:small-sigma}, the bounding facets of $\mathcal{X}_0$ carry negligible Gaussian mass, so the rate reduces to that of a single half-space, the closed-form tail $\Phi(-d^*_\Sigma)$. On the other hand, recovering the rate instead by scenario generation (drawing from $\mathcal{N}(x_0,\Sigma)$ and keeping the fraction inside $S_0$) costs $\Omega(1/\nu_{\mathrm{local}})$ pipeline solves (Theorem \ref{thm:lower-bound}), since the kept fraction is the rate. Sampling from the polyhedron uniformly instead targets the wrong measure, and reweighting interior draws by $\phi_{x_0, \Sigma}$ to correct this reintroduces exactly the integral (and the normalizing constant $\nu_{\mathrm{local}}$) that should be computed. Our closed-form solution below avoids both routes.

\subsection{Closed-Form Violation Rate.}
\label{subsec:local-rate}

Under $X \sim \mathcal{N}(x_0, \Sigma)$, the scalar $a^\top X$ is univariate Gaussian, $a^\top X \sim \mathcal{N}(a^\top x_0,  a^\top \Sigma a)$. This gives the local violation rate, up to the single-component error of Assumption \ref{ass:small-sigma}. Define the \emph{Mahalanobis signed distance} from $x_0$ to the violation boundary,
\begin{equation}
    \label{eq:signed-dist}
    d^*_\Sigma(x_0)  :=  \frac{b_0 - a^\top x_0}{\sqrt{a^\top \Sigma a}}.
\end{equation}
Here $\sqrt{a^\top \Sigma a} = \|a\|_\Sigma$ is the dual ($\Sigma$-)norm of the covector $a$, equivalently the standard deviation of the scalar $a^\top X$. Thus $d^*_\Sigma(x_0)$ is the Euclidean signed distance from $x_0$ to the violation boundary after the  change of coordinates $x \mapsto \Sigma^{-1/2}x$. Points are measured in the Mahalanobis-norm (using $\Sigma^{-1}$) and hyperplane normals such as $a$ in the dual $\Sigma$-norm, the two being dual under this change of coordinates. Figure \ref{fig:local-geometry} illustrates the terms in our construction.

\begin{figure}
    \centering
    \begin{subfigure}[t]{0.5\textwidth}
    \centering
    \raisebox{0.7cm}{
      \begin{tikzpicture}
      \begin{scope}[scale=1.30]
        \fill[red!12]
          (1.645,0.443) -- (1.461,1.361) -- (-1.385,1.646) -- (-1.554,1.083) -- cycle;
        \draw[thick, fill=blue!5, fill opacity=0.5]
          (-2.206,-1.089) -- (1.123,-1.754) -- (1.906,-0.859)
          -- (1.461,1.361) -- (-1.385,1.646) -- cycle;
        \node[blue!55!black] at (-1.85,-0.95) {\small $\CX_0$};
        \draw[densely dashed, gray!75, rotate around={-140.0:(0,0)}]
          (0,0) ellipse [x radius=1.050, y radius=0.300];
        \draw[densely dashed, gray!55, rotate around={-140.0:(0,0)}]
          (0,0) ellipse [x radius=2.100, y radius=0.600];
        \node[gray!60!black] at (0.95,-0.55) {\small $\mathcal{N}(x_0,\Sigma)$};
        \draw[very thick, red!70!black] (-2.275,1.227) -- (3.609,0.050);
        \node[red!60!black, rotate=-11] at (2.55,0.45) {\small $a^\top x = b_0$};
        \node[red!55!black] at (-1.25,1.35) {\small $S_0$};
        \fill[black] (0,0) circle (1.6pt);
        \node[below left] at (0,0) {\small $x_0$};
        \draw[-{Latex[length=2mm]}, thick, blue!60!black] (0,0) -- (0.667,0.638);
        \fill[blue!60!black] (0.667,0.638) circle (1.0pt);
        \node[blue!55!black, below right, xshift=1pt, yshift=-1pt] at (0.50,0.48) {\small $d^*_\Sigma$};
        \draw[-{Latex[length=1.6mm]}, teal!55!black] (0,0) -- (-0.294,-1.471);
        \node[teal!45!black, right] at (-0.22,-0.95) {\small $\rho_{\min}$};
        \draw[-{Latex[length=2mm]}, red!70!black, thin] (0,0) -- (0.255,1.275);
        \node[red!60!black, right] at (0.20,1.30) {\small $a$};
        \draw[-{Latex[length=2mm]}, blue!60!black, thin] (0,0) -- (0.939,0.899);
        \node[blue!55!black, right] at (0.95,0.92) {\small $v^{*}=\Sigma a/\|\cdot\|$};
      \end{scope}
      \end{tikzpicture}%
    }
    \caption{The optimal basis at $x_0$ fixes a region $\CX_0$ on which the   decision is affine; the violation condition restricts to the half-space $\{a^\top x \ge b_0\}$, and the violation set $S_0$ (shaded) is its intersection with $\CX_0$. The uncertainty $\mathcal{N}(x_0,\Sigma)$ is shown as $1$- and $2$-standard-deviation contours; its covariance makes the most-sensitive direction $v^{*}=\Sigma a/\|\Sigma a\|$ differ from the boundary normal $a$. The margin $\rho_{\min}$ to the nearest facet controls the probability of leaving $\CX_0$ (Assumption \ref{ass:small-sigma}), and $d^{*}_{\Sigma}$ is the Mahalanobis signed distance from $x_0$ to the violation boundary.}
    \end{subfigure}%
    ~~
    \begin{subfigure}[t]{0.5\textwidth}
    \centering
    \resizebox{0.85\textwidth}{!}{%
      \begin{tikzpicture}
        \begin{axis}[
            width=8.0cm, height=6.2cm,
            xmin=-3.6, xmax=3.6, ymin=0, ymax=0.46,
            axis lines=left,
            xtick={0.9}, xticklabels={$d^{*}_{\Sigma}(x_0)$},
            ytick=\empty,
            xlabel={$t=(a^\top x - a^\top x_0)/s$ ~ \tiny(standardized)},
            every axis x label/.style={at={(axis description cs:0.5,-0.1)},anchor=north},
            clip=false,
          ]
          \addplot[domain=-3.6:3.6, samples=200, thick, gray!70!black]
            {1/sqrt(2*pi)*exp(-x^2/2)};
          \addplot[domain=0.9:3.6, samples=120, draw=none, fill=red!18]
            {1/sqrt(2*pi)*exp(-x^2/2)} \closedcycle;
          \draw[very thick, red!70!black] (axis cs:0.9,0) -- (axis cs:0.9,0.40);
          \node[red!60!black, anchor=south] at (axis cs:0.9,0.40) {\small boundary};
          \node[red!55!black, anchor=west] at (axis cs:1.55,0.12)
            {\small $\nu_{\mathrm{local}}=\Phi(-d^{*}_{\Sigma}(x_0))$};
          \draw[-{Latex[length=1.6mm]}, red!55!black]
            (axis cs:1.95,0.10) -- (axis cs:1.45,0.035);
          \fill[black] (axis cs:0,0) circle (1.3pt);
          \node[anchor=north east] at (axis cs:0.225,-0.005) {\small $0$};
        \end{axis}
      \end{tikzpicture}%
    }
      \caption{Projecting onto the boundary normal reduces the rate to a one-dimensional Gaussian tail: the standardized projection $t=(a^\top x - a^\top x_0)/s$ with $s=\sqrt{a^\top\Sigma a}$ is standard normal, and the violation rate $\nu_{\mathrm{local}}=\Phi(-d^{*}_{\Sigma})$ is the shaded mass beyond $d^{*}_{\Sigma}$ (Proposition \ref{prop:local-rate}).}
    \end{subfigure}
    \caption{Geometry of violation characterization.}
    \label{fig:local-geometry}    
\end{figure}

\begin{proposition}[Violation rate]
\label{prop:local-rate}
Under Assumptions \ref{ass:nondeg} and \ref{ass:small-sigma}, the local violation rate $\nu_{\mathrm{local}} := P_{\mathrm{local}}(X \in S)$ satisfies the two-sided bound
\begin{equation}
    \label{eq:local-rate}
    \Bigl| \nu_{\mathrm{local}}  -  \Phi \bigl(-d^*_\Sigma(x_0)\bigr) \Bigr|
     \leq  \varepsilon_0,
\end{equation}
where $\Phi$ is the standard normal CDF. In isotropic case, i.e., $\Sigma = \sigma^2 I$, we have $d^*_\Sigma(x_0) = d^*(x_0)/\sigma$ with $d^*(x_0) = (b_0 - a^\top x_0)/\|a\|$, recovering $\nu_{\mathrm{local}} = \Phi(-d^*(x_0)/\sigma) \pm \varepsilon_0$.
\end{proposition}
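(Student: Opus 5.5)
Writing $H := \{x \in \RR^p : a^\top x \ge b_0\}$ for the violation half-space, the plan is to split the event $\{X \in S\}$ according to whether the perturbed input stays in the basis region $\CX_0$ of Lemma \ref{lem:local-affine}. By Lemma \ref{lem:local-affine}, for every $x \in \CX_0$ we have $Z_0(x) = C_0 x + d_0$. Hence $w^\top Z_0(x) \ge \gamma$ iff $a^\top x \ge b_0$, with $a, b_0$ as in \eqref{eq:Sk0}, and so
\[
S \cap \CX_0 = H \cap \CX_0 = S_0 .
\]
This identity is the only place the pipeline structure enters. Everything afterwards is measure-theoretic bookkeeping under $P_{\mathrm{local}} = \mathcal{N}(x_0,\Sigma)$.

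Next I decompose both quantities over $\CX_0$ and its complement:
\[
\nu_{\mathrm{local}} = P_{\mathrm{local}}(S_0) + P_{\mathrm{local}}(S \setminus \CX_0), \qquad
P_{\mathrm{local}}(H) = P_{\mathrm{local}}(S_0) + P_{\mathrm{local}}(H \setminus \CX_0).
\]
Subtracting gives $\nu_{\mathrm{local}} - P_{\mathrm{local}}(H) = P_{\mathrm{local}}(S\setminus\CX_0) - P_{\mathrm{local}}(H\setminus\CX_0)$. Each of these two terms lies in $[0, P_{\mathrm{local}}(X\notin\CX_0)]$, so their difference has absolute value at most $P_{\mathrm{local}}(X \notin \CX_0)$. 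This is at most $\varepsilon_0$ by \eqref{eq:exit-prob} and Assumption \ref{ass:small-sigma}. The point of writing it as a difference of two nonnegative terms, rather than using a triangle inequality, is that it avoids a factor of $2$.

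It remains to evaluate $P_{\mathrm{local}}(H)$ exactly. Since $a^\top X \sim \mathcal{N}(a^\top x_0, a^\top\Sigma a)$, the standardized variable $(a^\top X - a^\top x_0)/\sqrt{a^\top \Sigma a}$ is standard normal. Therefore
\[
P(a^\top X \ge b_0) = 1 - \Phi\bigl(d^*_\Sigma(x_0)\bigr) = \Phi\bigl(-d^*_\Sigma(x_0)\bigr)
\]
by \eqref{eq:signed-dist} and the symmetry of $\Phi$. The isotropic statement then follows by substituting $\Sigma = \sigma^2 I$, so that $\sqrt{a^\top\Sigma a} = \sigma\|a\|$.

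The main obstacle is a degenerate case: if $w^\top C_0 = 0$, then $a = 0$ and $d^*_\Sigma$ is undefined. I would handle this by assuming $a \neq 0$, noting that otherwise the violation indicator is constant on $\CX_0$ and the rate is $0$ or $1$ up to $\varepsilon_0$. A second, minor point is measurability and the treatment of ties in the argmin outside $\CX_0$. These do not matter, because the bound only uses that the events outside $\CX_0$ have total mass at most $\varepsilon_0$.
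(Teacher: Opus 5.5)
Your proof is correct and follows the paper's route: use Lemma~\ref{lem:local-affine} to get $S\cap\CX_0 = S_0 = H\cap\CX_0$, bound the discrepancy by $P_{\mathrm{local}}(X\notin\CX_0)\le\varepsilon_0$ via \eqref{eq:exit-prob} and Assumption~\ref{ass:small-sigma}, and evaluate the half-space mass as the univariate Gaussian tail $\Phi(-d^*_\Sigma(x_0))$. Your decomposition into two nonnegative terms, each bounded by the exit probability, is the rigorous form of the paper's remark that both quantities are ``squeezed within the same $\{X\notin\CX_0\}$ event,'' which is what gives $\varepsilon_0$ rather than $2\varepsilon_0$; your side remark on the degenerate case $a=0$ covers a point the paper leaves implicit.
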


\begin{proof}
Let $\nu_{S_0} := P_{\mathrm{local}}(X \in S_0)$ be the single-component violation probability. The events $\{X \in S\}$ and $\{X \in S_0\}$ differ only on $\{X \notin \CX_0\}$: on $\CX_0$ the decision is the affine piece \eqref{eq:pwa}, so $V(x) = \mathbf{1}[a^\top x \ge b_0]$ there, and $S \cap \CX_0 = S_0$. Hence $|\nu_{\mathrm{local}} - \nu_{S_0}| \le P_{\mathrm{local}}(X \notin \CX_0) \le \varepsilon_0$ by Assumption \ref{ass:small-sigma}. Moreover, $S_0 \subseteq \{a^\top x \ge b_0\}$, and the two differ only on $\{X \notin \CX_0\}$, so $\nu_{S_0}$ and $P_{\mathrm{local}}(a^\top X \ge b_0)$ also differ by at most $\varepsilon_0$. Since both estimates of $\nu_{\mathrm{local}}$ are squeezed within the same $\{X \notin \CX_0\}$ event, $|\nu_{\mathrm{local}} - P_{\mathrm{local}}(a^\top X \ge b_0)| \le \varepsilon_0$. Finally, since $a^\top X \sim \mathcal{N}(a^\top x_0, a^\top\Sigma a)$, 
\[
P_{\mathrm{local}}(a^\top X \ge b_0)
= \Phi\left(\frac{a^\top x_0 - b_0}{\sqrt{a^\top\Sigma a}}\right)
= \Phi\bigl(-d^*_\Sigma(x_0)\bigr),
\]
which gives \eqref{eq:local-rate}; see also Figure \ref{fig:local-geometry}. The isotropic case of $\Sigma = \sigma^2 I$ follows from $a^\top(\sigma^2 I)a = \sigma^2\|a\|^2$.
\end{proof}

The Mahalanobis signed distance $d^*_\Sigma(x_0)$ has a direct interpretation: it measures how far $x_0$ is from the violation boundary in units of the perturbation's own standard deviation \emph{along the violation-driving direction} $a$. If $d^*_\Sigma(x_0) > 0$, the point is safe and $\nu_{\mathrm{local}} < 1/2$. However, if $d^*_\Sigma(x_0) \le 0$, then it already violates and $\nu_{\mathrm{local}} \ge 1/2$. This is the appropriate reliability measure under correlated uncertainty: a decision can be far from the boundary in Euclidean terms yet fragile if the perturbation has large variance precisely in the direction $a$. In the case of $\Sigma = \sigma^2 I$, the ratio $d^*(x_0)/\sigma$ implies that all directions carry equal uncertainty.

\begin{remark}[Computational cost]
\label{rem:local-computation}
Computing $\nu_{\mathrm{local}}$ requires: (i) one LP solve at $x_0$ to obtain basis $\mathcal{B}(x_0)$, and hence $C_0$ and $d_0$ via Lemma \ref{lem:local-affine}; (ii) one matrix-vector product for $a = (w^\top C_0)^\top$ and the quadratic form $a^\top \Sigma a$, and the scalar $b_0 = \gamma - w^\top d_0$; (iii) one evaluation of $\Phi$. The total complexity is $\mathcal{O}(\mathrm{LP}(d, m, p) + p^2)$.
\end{remark}

\begin{table}[t]
\centering
\begin{tabular}{ccc}
\hline
$d^*_\Sigma(x_0)$ & $\nu_{\mathrm{local}}$ & $\Omega(1/\nu_{\mathrm{local}})$ samples \\
\hline
$1.0$ & $0.159$ & $\sim 6$ \\
$2.0$ & $0.023$ & $\sim 44$ \\
$3.0$ & $0.0013$ & $\sim 750$ \\
$4.0$ & $3.2\times10^{-5}$ & $\sim 31{,}000$ \\
$5.0$ & $2.9\times10^{-7}$ & $\sim 3.4\times10^{6}$ \\
\hline
\end{tabular}
\caption{Sample complexity of scenario generation for local violation detection as a
function of the Mahalanobis signed distance $d^*_\Sigma(x_0)$ from $x_0$ to the
violation boundary (in the isotropic case $d^*_\Sigma(x_0) = d^*(x_0)/\sigma$). The
closed-form result of Proposition \ref{prop:local-rate} makes scenario generation
unnecessary in all cases.}
\label{tab:local-complexity}
\end{table}

As noted in Section \ref{sec:hardness}, the lower bound of Theorem \ref{thm:lower-bound} applies to $P_{\mathrm{local}}$ without modification, so scenario generation requires $\Omega(1/\nu_{\mathrm{local}})$ perturbed evaluations to detect a local violation rate of $\nu_{\mathrm{local}}$. By Proposition \ref{prop:local-rate}, $\nu_{\mathrm{local}} = \Phi(-d^*_\Sigma(x_0)) \pm \varepsilon_0$, which decreases rapidly as the Mahalanobis margin $d^*_\Sigma(x_0)$ grows--that is, as $x_0$ moves away from the violation boundary relative to the perturbation scale in the direction $a$. Table \ref{tab:local-complexity} illustrates the resulting sample complexity: at a Mahalanobis distance of $5.0$, scenario generation would require over three million perturbed evaluations to detect a rate that Proposition \ref{prop:local-rate} returns in closed form from a single evaluation of $\Phi$. 

\subsection{Direct Sampler for the Local Violation Distribution.}
\label{subsec:local-sampler}

The single-component target $\mathcal{N}(x_0, \Sigma)(\cdot \mid a^\top X \ge b_0)$ is a Gaussian truncated to a half-space. It admits an exact direct sampler built from the Gaussian conditional decomposition along the projection $T := a^\top X$. Under $X \sim \mathcal{N}(x_0, \Sigma)$
\begin{equation}
    \label{eq:T-marginal}
    T = a^\top X \sim \mathcal{N}\bigl(a^\top x_0, a^\top \Sigma a\bigr),
\end{equation}
and the conditional distribution of $X$ given $T = t$ is Gaussian with mean and covariance
\begin{equation}
    \label{eq:gauss-cond}
    \mu_{X\mid T}(t) = x_0 + \frac{\Sigma a}{a^\top \Sigma a}(t - a^\top x_0),
    \qquad
    \Sigma_{X\mid T} = \Sigma - \frac{\Sigma a a^\top \Sigma}{a^\top \Sigma a},
\end{equation}
the latter being the Schur complement of $a^\top\Sigma a$ in $\Sigma$. The violation condition $a^\top X \ge b_0$ is the event $T \ge b_0$, equivalently $(T - a^\top x_0)/\sqrt{a^\top\Sigma a} \ge -d^*_\Sigma(x_0)$, and constrains only $T$.

We state the sampler's accuracy in total-variation distance. For two distributions $\mu, \nu$ on the same space, the total-variation distance $\mathrm{TV}(\mu,\nu) = \sup_A |\mu(A) - \nu(A)|$ is the largest difference in the probability they assign to any event $A$. A bound $\mathrm{TV}(\mu,\nu) \le \eta$ guarantees that the two distributions assign probabilities differing by at most $\eta$ to every event, and expectations of any function bounded by one differing by at most $2\eta$.

\begin{proposition}[Direct sampler]
\label{prop:local-sampler}
Let $s := \sqrt{a^\top \Sigma a}$. Under Assumption \ref{ass:nondeg} and Assumption \ref{ass:small-sigma}, the following procedure generates exact i.i.d. samples from the single-component target $\mathcal{N}(x_0, \Sigma)(\cdot \mid a^\top X \ge b_0)$:
\begin{enumerate}
    \item[(i)] Draw $U \sim \mathcal{U}(0, 1)$ and set
    \begin{equation}
        \label{eq:truncated-normal}
        T = a^\top x_0 + s\Phi^{-1}\left(
        \Phi\bigl(d^*_\Sigma(x_0)\bigr)
        + U\bigl(1 - \Phi\bigl(d^*_\Sigma(x_0)\bigr)\bigr)
        \right).
    \end{equation}
    That is, $T$ from $\mathcal{N}(a^\top x_0, s^2)$ truncated to $[b_0, \infty)$. Note that $(b_0 - a^\top x_0)/s = d^*_\Sigma(x_0)$, so the truncation occupies the upper tail of probability $1 - \Phi(d^*_\Sigma(x_0)) = \Phi(-d^*_\Sigma(x_0)) = \nu_{\mathrm{local}}$ (up to $\varepsilon_0$).
    \item[(ii)] Draw $X$ from the Gaussian conditional $\mathcal{N}\bigl(\mu_{X\mid T}(T), \Sigma_{X\mid T}\bigr)$ of \eqref{eq:gauss-cond}.
    \item[(iii)] Return $X$.
\end{enumerate}
The samples produced have distribution within total-variation distance $\mathcal{O}(\varepsilon_0/\nu_{\mathrm{local}})$ of $Q_{\mathrm{local}}$. After an $\mathcal{O}(p^2)$ precomputation (of $\Sigma a$, $s$, and a factor of $\Sigma_{X\mid T}$), each sample costs $\mathcal{O}(p^2)$ operations.
\end{proposition}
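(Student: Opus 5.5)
The plan has three parts: show that the two-stage procedure samples exactly from the truncated Gaussian, bound the total-variation distance to $Q_{\mathrm{local}}$, and count operations.

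\emph{Exactness.} First I would verify that steps (i)--(ii) produce exact draws from the half-space-truncated Gaussian $\tilde Q := \mathcal{N}(x_0,\Sigma)(\cdot \mid a^\top X \ge b_0)$. Step (i) is inverse-CDF sampling. If $U \sim \mathcal{U}(0,1)$, then $W := \Phi(d^*_\Sigma) + U(1-\Phi(d^*_\Sigma))$ is uniform on $[\Phi(d^*_\Sigma),1)$. Hence $\Phi^{-1}(W)$ has the law of a standard normal conditioned on being at least $d^*_\Sigma(x_0)$. After the affine map $t \mapsto a^\top x_0 + s\,t$, and using $(b_0 - a^\top x_0)/s = d^*_\Sigma(x_0)$, $T$ has the law of $\mathcal{N}(a^\top x_0, s^2)$ truncated to $[b_0,\infty)$. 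This is exactly the marginal of $a^\top X$ under $\tilde Q$.

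Next, $(X, a^\top X)$ is jointly Gaussian, so the conditional law of $X$ given $T=t$ is \eqref{eq:gauss-cond} by the standard Gaussian conditioning formula. The key point is that the truncation event $\{T \ge b_0\}$ is $\sigma(T)$-measurable. Conditioning on it therefore reweights only the marginal of $T$ and leaves the kernel $X \mid T$ unchanged. Formally, for any Borel set $A$,
\[
\tilde Q(A) = \frac{\mathbb{E}\bigl[\mathbf 1[T\ge b_0]\,P(X\in A\mid T)\bigr]}{P(T\ge b_0)},
\]
which equals the law of the two-stage draw. Independent $U$'s and independent Gaussian draws in step (ii) give i.i.d.\ samples.

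\emph{TV bound.} I compare $\tilde Q$ with $Q_{\mathrm{local}} = P(\cdot\mid S)$, writing $H := \{a^\top x \ge b_0\}$ and $P = \mathcal{N}(x_0,\Sigma)$. As in the proof of Proposition \ref{prop:local-rate}, $S$ and $H$ agree on $\CX_0$, so $P(S \triangle H) \le P(\CX_0^c) \le \varepsilon_0$. For two conditionals one has the elementary bound
\[
\Bigl|\frac{P(A\cap S)}{P(S)} - \frac{P(A\cap H)}{P(H)}\Bigr| \le \frac{|P(A\cap S) - P(A\cap H)|}{P(S)} + P(A\cap H)\Bigl|\frac{1}{P(S)} - \frac{1}{P(H)}\Bigr|.
\]
The first term is at most $\varepsilon_0/\nu_{\mathrm{local}}$. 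The second is at most $|P(H)-P(S)|/P(S) \le \varepsilon_0/\nu_{\mathrm{local}}$, using $P(A\cap H)\le P(H)$. Taking the supremum over $A$ gives $\mathrm{TV}(\tilde Q, Q_{\mathrm{local}}) \le 2\varepsilon_0/\nu_{\mathrm{local}} = \mathcal{O}(\varepsilon_0/\nu_{\mathrm{local}})$.

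The main subtlety is this step. The bound is only meaningful when $\varepsilon_0 \ll \nu_{\mathrm{local}}$, so I would remark on it rather than hide it. I would normalize by $\nu_{\mathrm{local}} = P(S)$, as above, to avoid requiring $P(H) > 0$ separately; in any case $P(H) = \Phi(-d^*_\Sigma) > 0$.

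\emph{Cost.} This is a bookkeeping step.
\begin{itemize}
\item $\Sigma a$ costs $\mathcal{O}(p^2)$, and then $s$ and $d^*_\Sigma$ cost $\mathcal{O}(p)$.
\item $\Sigma_{X\mid T}$ is a rank-one update of $\Sigma$. Its square-root factor can be taken as $\Sigma^{1/2}$ (or the Cholesky factor $L$) composed with the projector $I - \Sigma^{1/2}aa^\top\Sigma^{1/2}/s^2$. Applying this to a standard normal vector costs $\mathcal{O}(p^2)$ per sample.
\item Treating the factor of $\Sigma$ itself as given, as in Remark \ref{rem:local-computation}, the precomputation is $\mathcal{O}(p^2)$.
\item Per sample: step (i) needs $\mathcal{O}(1)$ evaluations of $\Phi$ and $\Phi^{-1}$, and step (ii) needs $\mathcal{O}(p)$ work for the mean plus $\mathcal{O}(p^2)$ for the correlated noise.
\end{itemize}
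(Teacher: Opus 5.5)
Your proposal is correct and follows essentially the same route as the paper. The shared steps are inverse-CDF sampling of the truncated projection $T$, the Gaussian kernel $X \mid T$ left unchanged because the truncation event is $\sigma(T)$-measurable, and the $2\varepsilon_0/\nu_{\mathrm{local}}$ total-variation bound obtained because both the unnormalized measures and the normalizers differ by at most $\varepsilon_0$. Your normalization by $P(S)=\nu_{\mathrm{local}}$ alone is slightly cleaner than the paper's assertion that both events have mass at least $\nu_{\mathrm{local}}$.

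Your cost accounting, which the paper's proof omits, rightly makes explicit that the $\mathcal{O}(p^2)$ precomputation presumes a factor of $\Sigma$ is available. One correction there: with a Cholesky factor $L$, the projector must be $I - L^\top a a^\top L/s^2$, not the $\Sigma^{1/2}$ version you wrote.
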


\begin{proof}
Step (i) draws $T$ from the correct truncated marginal of \eqref{eq:T-marginal} by the probability integral transform, using $P(T \ge b_0) = \Phi(-d^*_\Sigma(x_0))$. Step (ii) draws $X$ from the exact Gaussian conditional \eqref{eq:gauss-cond}. Since the truncation constrains only $T = a^\top X$ and $X \mid T$ is unaffected by it, the pair is distributed as $\mathcal{N}(x_0, \Sigma)$ conditioned on $a^\top X \ge b_0$, i.e., the single-component target. Finally, $Q_{\mathrm{local}}$ is the same Gaussian conditioned on $X \in S$; the two conditioning events $\{X \in S\}$ and $\{a^\top X \ge b_0\}$ differ only on $\{X \notin \CX_0\}$, which has  probability at most $\varepsilon_0$ under Assumption \ref{ass:small-sigma}. Conditioning on two events whose symmetric difference has probability $\le \varepsilon_0$, each with probability $\ge \nu_{\mathrm{local}}$, changes the normalized distribution by at most $\mathcal{O}(\varepsilon_0/\nu_{\mathrm{local}})$ in total variation: the unnormalized measures differ by $\le \varepsilon_0$ and the normalizers differ by $\le \varepsilon_0$, so the ratio bound is $2\varepsilon_0/\nu_{\mathrm{local}}$ in the worst case. Hence the sampler's output is within $\mathcal{O}(\varepsilon_0/\nu_{\mathrm{local}})$ total variation of $Q_{\mathrm{local}}$.
\end{proof}

The two error modes differ in character. On the one hand, the rate $\nu_{\mathrm{local}}$ incurs only an \emph{additive} error $\mathcal{O}(\varepsilon_0)$ (Proposition \ref{prop:local-rate}), which is benign no matter how small $\nu_{\mathrm{local}}$ is because the same geometry that makes $\nu_{\text{local}}$ small makes $\varepsilon_0$ smaller still. On the other hand, whereas the conditional distribution incurs a \emph{relative} error $\mathcal{O}(\varepsilon_0/\nu_{\mathrm{local}})$. Read naively, this ratio appears to diverge in the rare-violation regime ($\nu_{\mathrm{local}} \to 0$). It does not, because $\varepsilon_0$ and $\nu_{\mathrm{local}}$ are tail probabilities of the \emph{same} Gaussian: $\varepsilon_0 \le J\Phi(-\rho_{\min}(x_0))$ is the mass beyond the nearest basis facet (at Mahalanobis distance $\rho_{\min}$) and $\nu_{\mathrm{local}} = \Phi(-d^*_\Sigma(x_0))$ the mass beyond the violation boundary (at distance $d^*_\Sigma$). Since $t \mapsto \Phi(-t)e^{t^2/2}$ is decreasing on $(0,\infty)$\footnote{The Gaussian Mills-ratio bound $\Phi(-t) < \phi(t)/t$.}, for $\rho_{\min} > d^*_\Sigma$ the ratio obeys
\begin{equation}
    \label{eq:tv-refined}
    \frac{\varepsilon_0}{\nu_{\mathrm{local}}}
    \le J\frac{\Phi(-\rho_{\min})}{\Phi(-d^*_\Sigma)}
    \le J\exp\!\Bigl(-\tfrac12\bigl(\rho_{\min}^2 - d^{*2}_\Sigma\bigr)\Bigr),
\end{equation}
governed by the \emph{gap between the squared distances}, not by the magnitude of $\nu_{\mathrm{local}}$. A rare violation ($d^*_\Sigma$ large, $\nu_{\mathrm{local}}$ small) is therefore sampled accurately provided the nearest facet is even farther; in particular, a target total variation $\delta$ is guaranteed once
\begin{equation}
    \label{eq:margin-condition}
    \rho_{\min}(x_0)^2 - d^*_\Sigma(x_0)^2 \ge 2\log(J/\delta),
\end{equation}
a fixed additive margin in squared distance that is independent of how small $\nu_{\mathrm{local}}$ is. The bound degrades only when $\rho_{\min} \lesssim d^*_\Sigma$, i.e., when a basis change intrudes inside the violation boundary, the regime where the single-region model itself fails. Equivalently, the basis region must extend toward its facets at least as far as the violation boundary lies in the direction $a$, which holds whenever the deployed vertex is stable.

\begin{remark}[Decomposition]
\label{rem:whitening}
For $Y = \Sigma^{-1/2}(X - x_0) \sim \mathcal{N}(0, I)$, the violation condition becomes $\tilde a^\top Y \ge \tilde b$ with $\tilde a = \Sigma^{1/2} a$, a half-space through a \emph{standard} Gaussian. One may then run the elementary sampler (truncate the component along $\tilde a/\|\tilde a\|$, draw the orthogonal complement as standard normal) in $Y$-space and map back via $X = x_0 + \Sigma^{1/2} Y$. This recovers \eqref{eq:gauss-cond} and is convenient when a factor $\Sigma^{1/2}$ is already available.
\end{remark}

The rate (Proposition \ref{prop:local-rate}) and the exact sampler (Proposition \ref{prop:local-sampler}) assemble into a single routine as given in Algorithm \ref{alg:local-certificate}. Stage 1 returns the certificate from a single LP solve, and Stage 2 draws exact samples for any statistic not in closed form. Using Algorithm \ref{alg:local-certificate}, we plot in Figure \ref{fig:complexity} the i.i.d. draws scenario generation requires as a function of $d^*_\Sigma$, for three tasks of increasing difficulty: detecting one violation ($\sim 1/\nu_{\mathrm{local}}$), estimating the rate to $30\%$ relative standard error ($\sim 1/(\nu_{\mathrm{local}}\epsilon^2)$), and collecting $30$ violating samples for distributional summaries ($\sim 30/\nu_{\mathrm{local}}$), following the $\Omega(1/\nu_{\mathrm{local}})$ bound of Theorem \ref{thm:lower-bound}. At $d^*_\Sigma = 5$ even mere detection exceeds three million pipeline evaluations, each an optimization solve, whereas the closed-form result of Proposition \ref{prop:local-rate} returns the rate from a single LP solve. Figure \ref{fig:efficiency} shows the operational counterpart: to generate $10^4$ valid violating scenarios, the rejection sampler needs $\sim 10^4/\nu_{\mathrm{local}}$ raw draws (exceeding $10^9$ at $d^*_\Sigma = 4.5$), while the direct sampler needs exactly $10^4$.

\begin{algorithm}[t]
\caption{Violation Certificate and Exact Sampler}
\label{alg:local-certificate}
\begin{algorithmic}[1]
\Require pipeline data $(B, b, A, c)$, violation $(w, \gamma)$, reference input
  $x_0$, covariance $\Sigma \succ 0$, sample count $n \ge 0$
\Statex \textbf{Stage 1: Certificate}
\State Solve the LP \eqref{eq:lp} at $\theta(x_0) = B x_0 + b$; let
  $\mathcal{B}_0$ be the optimal basis \Comment{Lemma \ref{lem:local-affine}}
\State $C_0 \gets A_{\mathcal{B}_0}^{-1} B_{\mathcal{B}_0}$,\quad
  $d_0 \gets A_{\mathcal{B}_0}^{-1} b_{\mathcal{B}_0}$
\State $a \gets (w^\top C_0)^\top$,\quad
  $b_0 \gets \gamma - w^\top d_0$,\quad
  $s \gets \sqrt{a^\top \Sigma a}$
\State $d^*_\Sigma \gets (b_0 - a^\top x_0)/s$;\quad
  $\nu_{\mathrm{local}} \gets \Phi(-d^*_\Sigma)$;\quad
  $v^* \gets \Sigma a / s$
\State \textbf{output} certificate $(d^*_\Sigma,\ \nu_{\mathrm{local}},\ v^*)$

\Statex \textbf{Stage 2: Exact Sampler} \Comment{when $n \geq 1$}
\State Precompute $\Sigma a$, the slope $m_T \gets \Sigma a / s^2$, and a factor $L$
  with $L L^\top = \Sigma - \Sigma a a^\top \Sigma / s^2$
  \Comment{eq. \eqref{eq:gauss-cond}}
\For{$i = 1$ to $n$}
  \State draw $U \sim \mathcal{U}(0,1)$;\quad
    $T \gets a^\top x_0 + s\Phi^{-1}\bigl(\Phi(d^*_\Sigma) + U(1 - \Phi(d^*_\Sigma))\bigr)$
    \Comment{eq. \eqref{eq:truncated-normal}}
  \State draw $Z \sim \mathcal{N}(0, I)$;\quad
    $X_i \gets x_0 + (T - a^\top x_0) m_T + L Z$
\EndFor
\State \textbf{return} samples $\{X_i\}_{i=1}^n$
\end{algorithmic}
\end{algorithm}

\begin{figure}[t]
\centering
\begin{subfigure}[b]{0.48\textwidth}
  \centering
  \includegraphics[width=\textwidth]{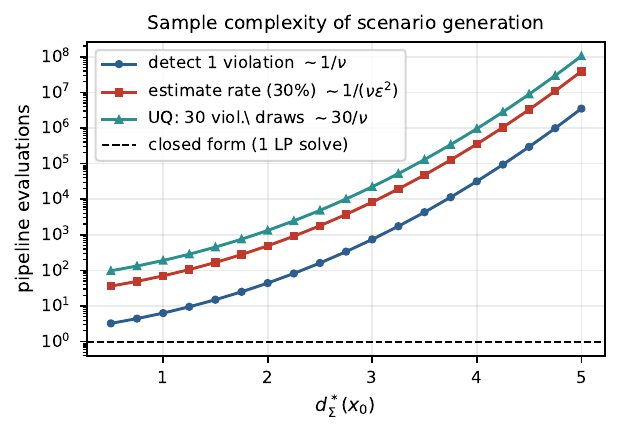}
  \caption{Draws required for detection, rate estimation, and distributional summaries vs. the flat closed-form cost.}
  \label{fig:complexity}
\end{subfigure}
\hfill
\begin{subfigure}[b]{0.48\textwidth}
  \centering
  \includegraphics[width=\textwidth]{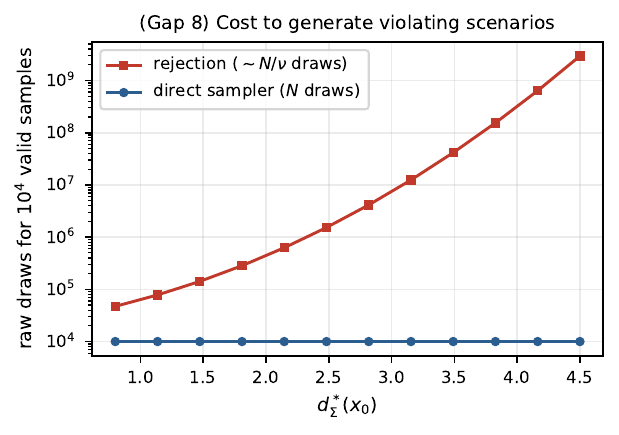}
  \caption{Raw draws to obtain $10^4$ valid violating samples: rejection ($\sim 10^4/\nu$) vs. the direct sampler ($10^4$).}
  \label{fig:efficiency}
\end{subfigure}
\caption{The cost of sampling-based scenario generation grows like $1/\nu_{\mathrm{local}}$, both information-theoretically (a) and operationally (b), while the closed-form certificate and the direct sampler are unaffected.}
\label{fig:complexity-efficiency}
\end{figure}

\subsection{The Violation Certificate: Statistics and Interpretation.}
\label{subsec:local-audit}

The closed-form rate and the conditional distribution of Proposition \ref{prop:local-sampler} yield, the statistics a practitioner would normally estimate by accumulating violating draws. In the subsequent result, we write $\alpha := d^*_\Sigma(x_0)$, $s := \sqrt{a^\top \Sigma a}$, and let $\lambda := \phi(\alpha)/\bigl(1 - \Phi(\alpha)\bigr)$ be the inverse Mills ratio.

\begin{corollary}[Closed-form violation statistics]
\label{cor:statistics}
Under the assumptions of Proposition \ref{prop:local-sampler}, the following hold in closed form, where all conditioning is on the violation event $\{X \in S\} = \{a^\top X \ge b_0\}$:
\begin{align}
    \mathbb{E}[X \mid X \in S]
      &= x_0 + \frac{\Sigma a}{s}\lambda, \label{eq:cond-mean}\\[2pt]
    \mathrm{Cov}[X \mid X \in S]
      &= \Sigma - \frac{\Sigma a a^\top \Sigma}{s^2}\lambda(\lambda - \alpha),
      \label{eq:cond-cov}\\[2pt]
    \mathbb{E}[w^\top Z_0(X) - \gamma \mid X \in S]
      &= s(\lambda - \alpha), \label{eq:severity}\\[2pt]
    \frac{\partial \nu_{\mathrm{local}}}{\partial x_0}
      &= \phi(\alpha)\frac{a}{s}. \label{eq:sensitivity}
\end{align}
\end{corollary}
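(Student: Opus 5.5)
The plan is to work entirely with the single-component target identified in Proposition \ref{prop:local-sampler}. We take the conditioning event to be the half-space $\{a^\top X \ge b_0\}$, as the corollary stipulates, and reduce every statistic to moments of a one-dimensional truncated standard normal. Write $T = a^\top X$ and $\Zeta := (T - a^\top x_0)/s$, so that $\Zeta \sim \mathcal{N}(0,1)$ and the violation event is exactly $\{\Zeta \ge \alpha\}$. First I will record the standard truncated-normal facts for $\Zeta \mid \Zeta \ge \alpha$:
\[
\mathbb{E}[\Zeta \mid \Zeta\ge\alpha] = \lambda, \qquad
\mathrm{Var}[\Zeta \mid \Zeta\ge\alpha] = 1 + \alpha\lambda - \lambda^2 = 1 - \lambda(\lambda-\alpha).
\]
Both follow by integrating $t\phi(t)$ and $t^2\phi(t)$ over $[\alpha,\infty)$, using $\phi'(t) = -t\phi(t)$ and one integration by parts.

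For the mean and covariance I will use the decomposition in \eqref{eq:gauss-cond}. Write $X = \mu_{X\mid T}(T) + R$, where $R \sim \mathcal{N}(0,\Sigma_{X\mid T})$ is independent of $T$. Because the truncation acts only on $T$ (exactly as in the proof of Proposition \ref{prop:local-sampler}), this representation survives conditioning. Hence
\[
X = x_0 + \frac{\Sigma a}{s}\,\Zeta + R .
\]
Taking the conditional expectation gives \eqref{eq:cond-mean} immediately. By independence, the conditional covariance is $\frac{\Sigma a a^\top\Sigma}{s^2}\mathrm{Var}[\Zeta\mid\Zeta\ge\alpha] + \Sigma_{X\mid T}$. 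Substituting the Schur complement, the $\Sigma a a^\top \Sigma/s^2$ terms cancel except for the $-\lambda(\lambda-\alpha)$ piece, which yields \eqref{eq:cond-cov}.

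For the severity \eqref{eq:severity}, on $\CX_0$ we have $w^\top Z_0(X) - \gamma = a^\top X - b_0 = s(\Zeta - \alpha)$, by \eqref{eq:pwa} and \eqref{eq:Sk0}. Taking the conditional expectation gives $s(\lambda - \alpha)$. For the sensitivity \eqref{eq:sensitivity}, I will differentiate $\Phi(-\alpha)$ with $\alpha = (b_0 - a^\top x_0)/s$. This requires holding $a$, $b_0$ and $s$ fixed, which is legitimate because the basis $\mathcal{B}_0$, and hence $C_0$ and $d_0$, is locally constant in $x_0$ by Lemma \ref{lem:local-affine}. The chain rule gives $-\phi(-\alpha)\,\partial\alpha/\partial x_0 = \phi(\alpha)\,a/s$.

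I expect the main obstacle to be interpretive rather than computational. The corollary equates $\{X\in S\}$ with the half-space, which is valid only for the single-component target. For the true $Q_{\mathrm{local}}$ the identities hold only up to the $\mathcal{O}(\varepsilon_0/\nu_{\mathrm{local}})$ total-variation error of Proposition \ref{prop:local-sampler}; for the unbounded mean and covariance functionals this would require an additional tail or moment argument. I would state explicitly that the formulas are exact for the single-component target and leave that approximation remark aside. The only computation that needs care is the truncated second moment.
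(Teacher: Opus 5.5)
Your proposal is correct and follows the paper's appendix proof essentially step for step. Like the paper, you standardize $T = a^\top X$ and use the truncated-normal mean $\lambda$ and variance $1-\lambda(\lambda-\alpha)$; you decompose $X$ along $T$ with an independent Schur-complement residual to get the mean and covariance, use $w^\top Z_0(x)-\gamma = a^\top x - b_0$ for the severity, and apply the chain rule to $\Phi(-d^*_\Sigma(x_0))$ for the sensitivity. Your two additions are sound refinements that the paper leaves implicit: you note that $a$, $b_0$, $s$ are locally constant in $x_0$ because $x_0$ is interior to $\mathcal{X}_0$, and that the formulas are exact only for the half-space target.
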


The proof, a specialization of the standard truncated-Gaussian moments to the half-space $S$, is given in Appendix \ref{app:corollary}. Equation \eqref{eq:cond-mean} is the \emph{typical violating input}; \eqref{eq:cond-cov} its spread; \eqref{eq:severity} the \emph{expected violation severity}, using the identity $w^\top Z_0(x) - \gamma = a^\top x - b_0$ on $\mathcal{X}_0$, with tail quantiles of the severity available from the univariate truncated normal of $a^\top X \mid X \in S$; and \eqref{eq:sensitivity} the \emph{sensitivity} of the rate to the deployed input. Each of these would otherwise require accumulating $\Omega(k/\nu_{\mathrm{local}})$ violating evaluations by scenario generation to estimate from $k$ samples. Here, they are instead just formulas.

Together, the Mahalanobis signed distance $d^*_\Sigma(x_0)$ and the rate $\nu_{\mathrm{local}}$ constitute an \emph{individual violation certificate} for the decision at $x_0$: a quantitative, auditable statement about the reliability of the specific decision as deployed, computable from a single LP solve. A large $d^*_\Sigma(x_0)$ means the decision is deep inside the safe region relative to the prescribed uncertainty and rarely violates; a small or negative value means it is fragile and warrants scrutiny or pipeline revision. The certificate is also explanatory. Violations are driven by displacement along $v^* = \Sigma a/\sqrt{a^\top \Sigma a}$, the direction that maximizes the violation probability per unit Mahalanobis perturbation; its largest-magnitude components identify the input features whose perturbation poses the greatest local risk, and the sensitivity \eqref{eq:sensitivity} makes this attribution quantitative. In the isotropic case $\Sigma = \sigma^2 I$, $v^*$ reduces to the unit vector $\hat a = a/\|a\|$ along the gradient $w^\top C_0$ of the violation criterion. The typical violating input \eqref{eq:cond-mean} and the expected severity \eqref{eq:severity} complete the picture: not only how often the decision fails, but what a failure looks like and how bad it is.

\section{An Application Pipeline: Economic Dispatch under an Emissions Cap.}
\label{sec:dispatch}

We illustrate the certificate on a optimization problem in which the violation half-space is not posited but \emph{emerges} from the cost structure and the active basis. This is the sense in which it is computed end-to-end from the pipeline rather than configured. Forecast-driven economic dispatch is a canonical predict-then-optimize application \citep{chen2022feature}. Although the unit-commitment problem is mixed integer, the continuous relaxation below maintains its structure: forecasts feed predicted demand, a least-cost dispatch is solved, and a regulatory cap on the result defines compliance. 


A system operator meets forecast electricity demand at least cost by dispatching five generators, choosing outputs $z \in \mathbb{R}^5$ to solve the economic-dispatch LP:
\begin{equation}
    \label{eq:dispatch-lp}
    \min_{z} ~ c^\top z
    \quad\text{s.t.}\quad
    \mathbf{1}^\top z = \theta_D(x),  ~ 0 \le z \le \bar z,
\end{equation}
where $c$ are marginal costs, $\bar z$ capacities, and the demand requirement $\theta_D(x) = B x + b$ is predicted from two forecast features $x = (x_{(1)}, x_{(2)})$. These are a system-load index and a renewable-availability index. The generators and their parameters are given in Table \ref{tab:generators}. A regulatory emissions cap requires the dispatched mix to satisfy $w^\top z \le \gamma$, where $w$ are the per-unit emission factors (column 4 of Table \ref{tab:generators}) and $\gamma$ is the cap; a violation is an emissions overshoot $w^\top z^\star(x) > \gamma$.

\begin{table}[t]
\centering
\small
\begin{tabular}{lccc}
\hline\hline
Generator & Cost (\$/MWh) & Capacity (MW) & Emissions (tCO$_2$/MWh) \\
\hline
G1 & 20 & 200 & 0.95 \\
G2 & 25 & 180 & 0.85 \\
G5 & 30 & 260 & 0.55 \\
G3 & 38 & 150 & 0.35 \\
G4 & 45 & 120 & 0.05 \\
\hline\hline
\end{tabular}
\caption{Generators for the dispatch example, listed in merit (cost) order. At the deployed forecast the cheapest units run at capacity and the mid-cost unit G5 is marginal; the cleaner but costlier units G3, G4 stay off.}
\label{tab:generators}
\end{table}

\paragraph{Certificate.}
At the deployed forecast $x_0 = (1.0, 0.5)$ the predicted demand is $\theta_D(x_0) = 500$ MW. Solving \eqref{eq:dispatch-lp} once returns the merit-order dispatch: the two cheapest units run at capacity (G1 at $200$, G2 at $180$ MW), the two cleaner but costlier units stay off (G3, G4 at $0$), and the mid-cost unit G5 is the marginal generator at $120$ MW (Figure \ref{fig:dispatch}(a)). This active basis fixes the affine piece $z(x) = C_0 x + d_0$: with G1, G2 capped and G3, G4 off, only G5 responds to demand, so $z_{\mathrm{G5}}(x) = \theta_D(x) - 380$. The violation functional inherits this structure. Emissions are $w^\top z(x) = 0.95(200) + 0.85(180) + 0.55 z_{\mathrm{G5}}(x)$, affine in $x$ through G5, so the violation half-space normal is \emph{derived} from the marginal unit's emission factor and the demand sensitivity, \begin{equation}
    \label{eq:dispatch-a}
    a = (w^\top C_0)^\top = 0.55 B^\top = (66,  -22)^\top,
\end{equation}
not chosen. With the forecast covariance 
\[
\Sigma = \left[
\begin{array}{cc}
    0.025 & 0.008  \\
     0.008 & 0.02 
\end{array}
\right]
\]
and an emissions cap $\gamma$ placing the deployed dispatch at emissions $409$ tCO$_2$ against a cap of $428.5$ tCO$_2$ (a margin of $19.5$ tCO$_2$), the certificate reads 
\begin{equation}
    \label{eq:dispatch-cert}
    d^*_\Sigma(x_0) = 2.00, \qquad \nu_{\mathrm{local}} = \Phi(-2.00) = 0.023,
\end{equation}
a $2.3\%$ chance that forecast error drives the least-cost dispatch over the emissions cap. The deployed vertex is stable, i.e., the nearest basis-change margin is $\rho_{\min} = 6.8 \gg d^*_\Sigma$, so Assumption \ref{ass:small-sigma} holds comfortably and the single-region closed form is accurate.

\paragraph{What the certificate reports.} Beyond the rate, Corollary \ref{cor:statistics} returns an interpretable audit (Figure \ref{fig:dispatch}(b)). When the cap is breached, the typical triggering forecast shift is concentrated almost entirely in the load feature ($+0.36$) rather than the renewable feature ($+0.02$): violations are demand-driven, not renewable-driven. The conditional severity is $3.6$ tCO$_2$ of expected overshoot, and the rate sensitivity attributes roughly three times more influence to the load feature than the renewable feature, identifying where tighter forecasting would most reduce compliance risk. All of this follows from the single dispatch solve at $x_0$, with no scenario generation.

\paragraph{The cost avoided.}
Every quantity above came from that one solve. Estimating the typical violating forecast by scenario generation would instead require accumulating violating draws by rejection, at a cost of $1/\nu_{\mathrm{local}} \approx 44$ dispatch solves each. Pinning the conditional mean to about $1\%$ relative accuracy here needs on the order of $10^4$ violating draws, where the count is set by the renewable feature $\mathbb{E}[X_{(2)} \mid X \in S] = 0.52$, whose conditional standard deviation is largest and whose relative precision is therefore worst. This is roughly $3 \times 10^4$ dispatch solves, with coarser targets ($10\%$ accuracy, or the rate itself) costing $\sim 10^3$ solves. The conditional mean, severity, and attribution are instead returned in closed form, exactly and at every feature scale. These explanation outputs (the attribution direction and the typical-failure forecast) are the per-decision evidence that transparency and human-oversight obligations for high-risk AI systems call for \citep[EU AI Act, Arts.~13--14;][]{EUAIA2024}, supporting rather than supplanting the operator.

\paragraph{Why the polyhedron alone would not do.}
It is worth discussing on this instance why the explicit violation set is not itself the certificate. The basis region $\mathcal{X}_0$ here is bounded by $J = 2$ facets (the demand levels at which the marginal generator G5
hits zero or its capacity) and $S_0$ is their intersection with the emissions half-space, a complete polyhedral description. Yet the audit needs the Gaussian \emph{mass} of that polyhedron, not its inequalities. Two facts make the point. First, the nearest facet sits $\rho_{\min} = 6.8$ standard deviations from $x_0$ in the $\Sigma$-metric, so the mass leaking past the facets is $J\exp(-\tfrac12\rho_{\min}^2) \approx 2 \times 10^{-10}$: the Gaussian measure of the full polyhedron $S_0$ and of the single emissions half-space agree to all reported digits ($0.0227$ either way), the collapse that Proposition \ref{prop:local-rate} makes exact up to $\varepsilon_0$ and that yields $\nu_{\mathrm{local}} = \Phi(-2.00)$ in closed form. Knowing the facets does not reveal this, the probabilistic argument does. Second, the distribution cannot be recovered by sampling the polyhedron under a convenient law: drawing \emph{uniformly} from $S_0$ places the typical violating forecast at a load level of $1.70$, wrongly locating the true conditional mean $1.36$ by more than $0.3$, because the violation distribution is the Gaussian restricted to $S_0$, which concentrates near the boundary, not the uniform law. Sampling $S_0$ correctly means sampling that restricted Gaussian, which is precisely $Q_{\mathrm{local}}$. The direct sampler (Proposition \ref{prop:local-sampler}) draws from it exactly, and Corollary \ref{cor:statistics} returns its moments without sampling at all.

\begin{figure}[t]
\centering
\includegraphics[width=\textwidth]{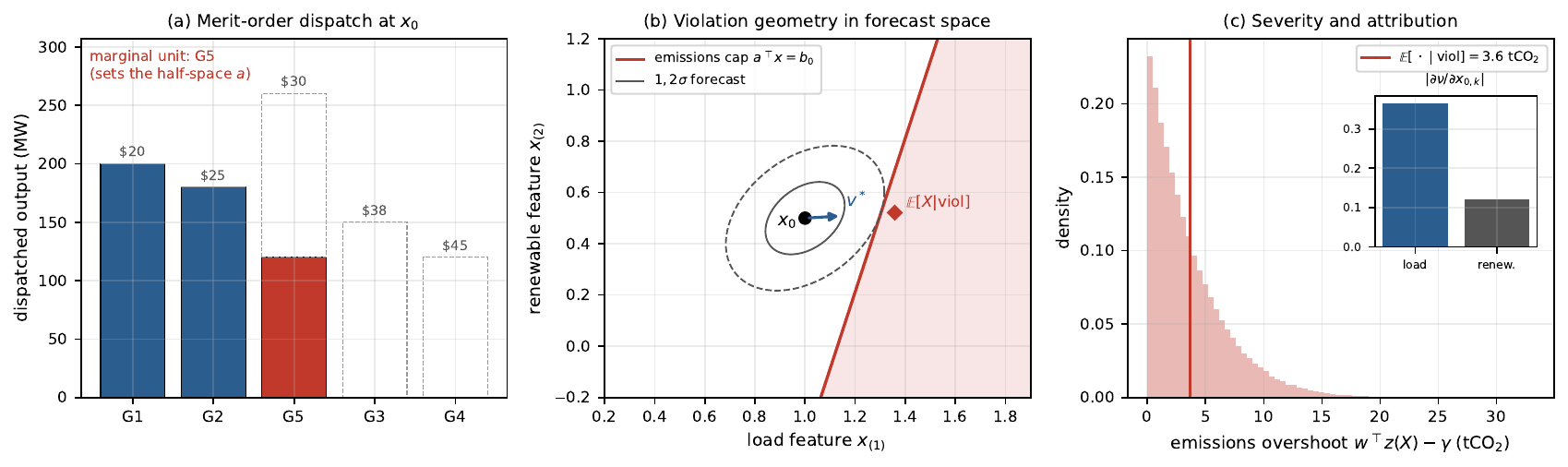}
\caption{Economic-dispatch example. (a) The merit-order dispatch at the deployed forecast: units are ordered by cost, the cheapest (G1, G2) run at capacity (solid), the cleaner costlier units (G3, G4) stay off, and the mid-cost unit G5 is marginal (red); that is G5's emission factor that, through the active basis, sets the violation half-space normal $a$ in \eqref{eq:dispatch-a}. Dashed outlines show unused capacity. (b) The induced violation geometry in forecast space: the deployed forecast $x_0$ under its uncertainty $\mathcal{N}(x_0, \Sigma)$ ($1$- and $2\sigma$ contours), the emissions-cap boundary $a^\top x = b_0$ (shaded violation region), the most-sensitive direction $v^*$, and the conditional-mean violating forecast. (c) The closed-form severity distribution (emissions overshoot $w^\top z(X) - \gamma$ given a violation, mean $3.6$ tCO$_2$ marked) with the feature-attribution sensitivities $|\partial\nu_{\mathrm{local}}/\partial x_{0,k}|$ inset, showing the load feature dominating.}
\label{fig:dispatch}
\end{figure}

\section{Discussion.}
\label{sec:discussion}

Our work shows that exact certificates can be obtained when the pipeline consists entirely of linear stages. In this section, we discuss the difficulties of extending these results beyond the setting we studied. Specifically, we examine the role of linearity in enabling exact certification, clarify what ``larger scale'' does and does not imply for local certificates, and explain why extending certification beyond a single basis region remains an open challenge that requires further investigation

\paragraph{What linearity brings.} The whole local characterization rests on Lemma \ref{lem:local-affine}: on the basis region $\mathcal{X}_0$ the optimal solution is the affine map $z(x) = C_0 x + d_0$, because the predictor is affine and only the right-hand side of the linear program varies with $x$. Two departures break this. If the predictor is nonlinear, the map $x \mapsto \theta(x)$ is no longer affine, so the pre-image $\mathcal{X}_0$ of a basis cone is no longer a polyhedron and the induced law of $a^\top X$ is no longer Gaussian. If the predicted parameter enters the \emph{cost} rather than only the right-hand side, the optimal basis itself varies nonlinearly with $\theta$ and the basic solution is no longer affine. In either case the closed-form rate $\Phi(-d^*_\Sigma)$ and the exact sampler fail together. The linear theory is therefore sharply delimited rather than incidentally restricted: it is exactly the regime in which the optimal-solution map is affine and the violation set is a half-space. The violation set $S = \{x : w^\top z(x) \ge \gamma\}$ remains well defined for any pipeline; what is lost is its closed form. One natural route for the nonlinear case is to sample a tempered relaxation of $S$ that recovers the present closed form in its zero-temperature limit; analyzing such a sampler is left to future work.

\paragraph{What ``larger scale'' means.} A natural question is whether the certificate scales. The answer requires separating two notions of size that behave very differently. The \emph{decision-space} size (the number of variables and constraints of the linear program) is essentially free: the certificate geometry lives entirely in input space, and the optimization enters only through the single solve that fixes the active basis and hence the half-space normal $a = (w^\top C_0)^\top$. The economic-dispatch pipeline of Section \ref{sec:dispatch} illustrates this directly: it carries five decision variables and a full set of capacity constraints, yet its certificate is a single one-dimensional rate $\Phi(-d^*_\Sigma)$ obtained from one LP solve. More generally, a model with hundreds of constraints yields the same one-dimensional rate at the cost of one such solve. The \emph{input-space} dimension $p$ (the number of uncertain forecast features) is the size that the formulas of Corollary \ref{cor:statistics} actually depend on, and there the cost is the $O(p^2)$ of a single Gaussian conditioning, with no dependence on the violation rate. Thus ``scaling up'' the optimization does not stress the certificate at all; only the dimension of the uncertainty does, and that dependence is polynomial and sampling-free. This is why a recognizable, realistically sized pipeline and a low-dimensional uncertainty model are not in tension: the former exercises the decision space, the latter keeps the certificate geometry interpretable.

\paragraph{The multi-region frontier.}
The single binding assumption is that the deployed vertex is stable: the perturbation reaches the violation boundary before it crosses into a neighboring optimal-basis region (Assumption \ref{ass:small-sigma}, $\rho_{\min} \gtrsim d^*_\Sigma$). When this fails, i.e., when a basis-change facet of $\mathcal{X}_0$ is nearer than the violation boundary, the violation set is no longer a single half-space but a union over basis regions, and the closed form ceases to be valid. This is not a limitation that a tighter argument removes; it is a genuine change in the object. A perturbation large enough to reach the violation boundary may first re-solve the linear program at a different vertex, where the committed decision is governed by a \emph{different} affine piece and hence a different violation half-space. Characterizing the rate then requires three things the single-region analysis avoids: enumerating the neighboring bases that the perturbation can reach, computing each region's half-space and its Gaussian mass, and combining them into a mixture, all while the number of reachable regions can grow combinatorially with the dimension of the model (up to $\binom{d}{m}$ in the worst case). The regions also need not align with the perturbation geometry, so the clean Mahalanobis picture of a single signed distance is replaced by a more intricate arrangement. The dispatch pipeline of Section \ref{sec:dispatch} makes the failure concrete. Suppose the marginal generator G5 had only narrow headroom above its deployed output (a tight capacity rather than the comfortable margin of Table \ref{tab:generators}). A demand surprise would then drive G5 to its capacity \emph{before} emissions reached the cap, at which point the next unit in the merit order, the cleaner G3 (emission factor $0.35$ against G5's $0.55$), becomes marginal. Past that basis change, additional demand is met by a lower-emitting generator, so emissions rise \emph{more slowly} than the single half-space, which extrapolates G5's emission rate indefinitely, predicts. The single-region certificate then \emph{overstates} the violation rate: in such an instance the closed form reports $\nu_{\mathrm{local}} = 0.023$ while the true rate is $0.008$, an overstatement of nearly threefold, precisely because $\rho_{\min} < d^*_\Sigma$ and the merit order flips inside the violation boundary. This multi-region regime is where the local certificate meets the global problem, and developing it (region enumeration, per-region certificates, and mixture aggregation with complexity governed by the model's combinatorial structure rather than by the violation rate) is the central direction the present result opens.

\section{Conclusion.}
\label{sec:conclusion}

We provide a closed-form characterization of the local violation behavior of a linear predict-then-optimize pipeline at a fixed deployed input. Our analysis begins with an information-theoretic limitation: detecting violations through scenario generation requires a number of samples that scales inversely with the violation rate. As a result, when violations are rare (the regime that is often of greatest practical interest) sampling-based detection becomes prohibitively expensive.

We then show that, under an affine predictor, a linear optimization problem, and a linear violation criterion, the local violation rate induced by Gaussian perturbations around a fixed input can be expressed in closed form, with an explicitly bounded approximation error. Building on this result, we derive an exact sampler for the local violation distribution. The sampler operates by first drawing from a truncated univariate Gaussian distribution and then sampling from the corresponding Gaussian conditional distribution. Our framework also yields an individual violation certificate consisting of the Mahalanobis signed distance and the associated violation rate. In addition, the most sensitive direction provides feature-level attribution, enabling interpretation of which input features contribute most strongly to risk. These quantities can be computed from a single linear-program solve and require no sampling procedures.

As a per-decision, auditable robustness assessment, the resulting certificate directly addresses the accuracy, robustness, and risk-management considerations that are increasingly emphasized in emerging regulations for high-risk AI systems, many of which will take effect in the coming years.

The local certificate developed here is the component most directly relevant to an individual deployed decision. It requires no sampling and introduces only a provably bounded linearization error. It also serves as the tractable foundation of a broader problem. The global violation distribution over the pipeline's full operating distribution can be viewed as a mixture across the model's optimal-basis regions, whose number may grow combinatorially. Constructing a global certificate therefore requires combining region enumeration, per-region certificates, and mixture aggregation, with computational complexity determined by the model's combinatorial structure rather than the violation rate. Extending this framework to convex, mixed-integer, and nonlinear pipelines is a key direction for future work.

\clearpage

\bibliographystyle{plainnat}
\bibliography{certify_refs}

\clearpage

\appendix

\vspace{3mm}
\begin{center}
\textbf{\large APPENDIX}    
\end{center}
\vspace{5mm}

\noindent\textbf{Proof of Corollary \ref{cor:statistics}.}
\label{app:corollary}
Let $T := a^\top X$, so $T \sim \mathcal{N}(a^\top x_0,  s^2)$ with $s^2 = a^\top \Sigma a$, and the violation event is $\{X \in S\} = \{T \ge b_0\}$. Write the standardized threshold $\alpha = (b_0 - a^\top x_0)/s = d^*_\Sigma(x_0)$ and the inverse Mills ratio $\lambda = \phi(\alpha)/(1 - \Phi(\alpha))$.

\emph{Truncated moments of $T$.} For a standard normal $Z$ conditioned on $Z \ge \alpha$, the classical truncated-normal moments are $\mathbb{E}[Z \mid Z \ge \alpha] = \lambda$ and $\mathrm{Var}[Z \mid Z \ge \alpha] = 1 - \lambda(\lambda - \alpha)$. Writing $T = a^\top x_0 + sZ$ gives
\begin{equation}
\label{eq:T-moments}
    \mathbb{E}[T \mid T \ge b_0] = a^\top x_0 + s\lambda, \qquad
    \mathrm{Var}[T \mid T \ge b_0] = s^2\bigl(1 - \lambda(\lambda - \alpha)\bigr).
\end{equation}

\emph{Conditional mean and covariance of $X$ \eqref{eq:cond-mean}--\eqref{eq:cond-cov}.} Decompose $X$ along $T$ using the Gaussian conditional of Proposition \ref{prop:local-sampler}: $X = x_0 + (T - a^\top x_0) \Sigma a/s^2 + W$, where $W \sim \mathcal{N}(0,  \Sigma - \Sigma a a^\top\Sigma/s^2)$ is independent of $T$. Conditioning on $\{T \ge b_0\}$ affects only the $T$-component, since $W \perp T$. Taking expectations and using \eqref{eq:T-moments}, we have
\[
    \mathbb{E}[X \mid X \in S]
    = x_0 + \bigl(\mathbb{E}[T \mid T \ge b_0] - a^\top x_0\bigr)\frac{\Sigma a}{s^2}
    = x_0 + \frac{\Sigma a}{s}\lambda,
\]
which is \eqref{eq:cond-mean}. For the covariance, independence of $W$ and $T$ gives
\[
    \mathrm{Cov}[X \mid X \in S]
    = \frac{\Sigma a  a^\top\Sigma}{s^4} \mathrm{Var}[T \mid T \ge b_0]
      + \Bigl(\Sigma - \frac{\Sigma a  a^\top\Sigma}{s^2}\Bigr).
\]
Substituting $\mathrm{Var}[T \mid T \ge b_0] = s^2(1 - \lambda(\lambda-\alpha))$ from \eqref{eq:T-moments} and simplifying, the $\Sigma a a^\top\Sigma/s^2$ terms combine to leave \eqref{eq:cond-cov}.

\emph{Expected severity \eqref{eq:severity}.} On $\mathcal{X}_0$, $z(x) = C_0 x + d_0$, so $w^\top Z_0(x) - \gamma = w^\top C_0 x + w^\top d_0 - \gamma = a^\top x - b_0$, using $a = (w^\top C_0)^\top$ and $b_0 = \gamma - w^\top d_0$. Hence, by \eqref{eq:T-moments}, 
\[
    \mathbb{E}[w^\top Z_0(X) - \gamma \mid X \in S]
    = \mathbb{E}[T \mid T \ge b_0] - b_0
    = a^\top x_0 + s\lambda - b_0 = s(\lambda - \alpha),
\]
since $b_0 - a^\top x_0 = s\alpha$. Tail quantiles of the severity follow from those of the univariate truncated normal $T \mid T \ge b_0$. 

\emph{Sensitivity \eqref{eq:sensitivity}.} With $\nu_{\mathrm{local}} = \Phi(-d^*_\Sigma(x_0))$ and $d^*_\Sigma(x_0) = (b_0 - a^\top x_0)/s$, the chain rule gives $\partial d^*_\Sigma/\partial x_0 = -a/s$, so
\[
    \frac{\partial \nu_{\mathrm{local}}}{\partial x_0}
    = -\phi(-d^*_\Sigma(x_0)) \frac{\partial d^*_\Sigma}{\partial x_0}
    = \phi(d^*_\Sigma(x_0)) \frac{a}{s} = \phi(\alpha) \frac{a}{s},
\]
using the symmetry $\phi(-u) = \phi(u)$ and $\alpha = d^*_\Sigma(x_0)$. This is \eqref{eq:sensitivity}. \hfill \qed

\end{document}